%%%% article format
\documentclass[12pt]{article} % Anonymized submission
\usepackage{fullpage}
%%%% COLT format
%\documentclass[anon,12pt]{colt2014} % Anonymized submission
%%\documentclass{colt2014} % Include author names
%\usepackage{times}

% The following packages will be automatically loaded:
% amsmath, amssymb, natbib, graphicx, url, algorithm2e

\usepackage{algorithm,algorithmic}
\usepackage{amsmath}
\usepackage{amsthm}
\usepackage{amsfonts}
\usepackage{latexsym}
\usepackage{amssymb}
\usepackage{hyperref}
\usepackage[round]{natbib}

\renewcommand{\Pr}{\mathbb{P}}

\newcommand{\tR}{\tilde R}

\newcommand{\E}{\mathbb{E}}

\newcommand{\half}{\frac{1}{2}}

\newcommand{\Fcal}{\mathcal{F}}

\newcommand{\Xcal}{\mathcal{X}}

\newcommand{\secref}[1]{Sec.~\ref{#1}}

\renewcommand{\eqref}[1]{Eq.~(\ref{#1})}
\newcommand{\lemref}[1]{Lemma~\ref{#1}}
\newcommand{\thmref}[1]{Theorem~\ref{#1}}

% Tomer's commands
\newcommand{\ignore}[1]{}

\newcommand{\abs}[1]{\left|#1\right|}
\newcommand{\lr}[1]{\left(#1\right)}
\newcommand{\set}[1]{\left\{#1\right\}}
\renewcommand{\t}[1]{\smash{\widetilde{#1}}} % made wide YP

\newcommand{\ind}[1]{1\!\!1_{#1}}

% Tomer's notations
\newcommand{\eps}{\epsilon}

\newcommand{\Q}{\mathcal{Q}}

\newcommand{\sig}{\sigma}

\newcommand{\cut}{\mathrm{cut}}
\newcommand{\anc}{\rho^*}
\newcommand{\depth}{d}
\newcommand{\width}{w}
\newcommand{\clip}{\mathrm{clip}}
% \newcommand{\gcd}{\mathrm{gcd}}

%%%% article format
\newtheorem{theorem}{Theorem}
\newtheorem{lemma}{Lemma}

% hyperref settings
\hypersetup{colorlinks,
            linkcolor=blue,
            citecolor=blue,
            urlcolor=magenta,
            linktocpage,
            plainpages=false}

\title{Online Learning with Composite Loss Functions}

\author{%
\makebox[0.4\linewidth]{Ofer Dekel}\\
Microsoft Research\\
\texttt{oferd@microsoft.com}
\and
\makebox[0.4\linewidth]{Jian Ding\thanks{Most of this work was done while the author was at Microsoft Research, Redmond.}}\\
University of Chicago\\
\texttt{jianding@galton.uchicago.edu}
\and\\
\makebox[0.4\linewidth]{Tomer Koren\footnotemark[1]}\\
Technion\\
\texttt{tomerk@technion.ac.il}
\and\\
\makebox[0.4\linewidth]{Yuval Peres}\\
Microsoft Research\\
\texttt{peres@microsoft.com}
}

\date{}

%%%% COLT format
%\title[Online Learning with Composite Loss Functions Can Be Hard]{Online Learning with Composite Loss Functions Can Be Hard}
%
%% Authors with different addresses:
%\coltauthor{%
%	\Name{Ofer Dekel} \Email{oferd@microsoft.com} \\
%	\addr Microsoft Research
%\AND
%	\Name{Jian Ding} \Email{jianding@galton.uchicago.edu}\\
%	\addr University of Chicago
%\AND
%	\Name{Tomer Koren} \Email{tomerk@technion.ac.il}\\
%	\addr Technion
%\AND
%	\Name{Yuval Peres} \Email{peres@microsoft.com}\\
%	\addr Microsoft Research
% }

\begin{document}

\maketitle

\begin{abstract}%   <- trailing '%' for backward compatibility of .sty file
We study a new class of online learning problems where each of the
online algorithm's actions is assigned an adversarial value, and the
loss of the algorithm at each step is a known and deterministic function 
of the values assigned to its recent actions. 
This class includes problems where the algorithm's loss is
the \emph{minimum} over the recent adversarial values, the
\emph{maximum} over the recent values, or a \emph{linear combination}
of the recent values. We analyze the minimax regret of this class of
problems when the algorithm receives bandit feedback, and prove that
when the    \emph{minimum} or \emph{maximum} functions are used, the
minimax regret is $\widetilde \Omega(T^{2/3})$ (so called \emph{hard}
online learning problems), and when a linear function is used, the
minimax regret is $\widetilde O(\sqrt{T})$ (so called \emph{easy}
learning problems). Previously, the only online learning problem that
was known to be provably hard was the multi-armed bandit with
switching costs.
%The present work provides the second natural online learning setting that contains provably hard problems
\end{abstract}

%%%% COLT format
%\begin{keywords}
%Online Learning, Adaptive Adversaries, Switching Costs, Lower Bounds
%\end{keywords}

\section{Introduction}

Online learning is often described as a $T$-round repeated game
between a randomized player and an adversary. On each round of the
game, the player and the adversary play simultaneously: the player
(randomly) chooses an action from an action set $\Xcal$ while the
adversary assigns a loss value to each action in $\Xcal$. The player
then incurs the loss assigned to the action he chose. At the end of
each round, the adversary sees the player's action and possibly adapts
his strategy. This type of adversary is called an \emph{adaptive}
adversary (sometimes also called \emph{reactive} or
\emph{non-oblivious}). In this paper, we focus on the simplest online
learning setting, where $\Xcal$ is assumed to be the finite set
$\{1,\ldots,k\}$.

The adversary has unlimited computational power and therefore, without
loss of generality, he can prepare his entire strategy in advance by
enumerating over all possible action sequences and predetermining his
response to each one.  More formally, we assume that the adversary
starts the game by choosing a sequence of $T$ history-dependent loss
functions, $f_1,\ldots,f_T$, where each $f_t: \Xcal^t \mapsto [0,1]$
(note that $f_t$ depends on the player's entire history of $t$
actions). With this, the adversary concludes his role in the game and
only the player actively participates in the $T$ rounds. On round $t$,
the player (randomly) chooses an action $X_t$ from the action set
$\Xcal$ and incurs the loss $f_t(X_{1:t})$ (where $X_{1:t}$ is our
shorthand for the sequence $(X_1,\ldots,X_t)$). The player's goal is
to accumulate a small total loss, $\sum_{t=1}^T f_t(X_{1:t})$.

At the end of each round, the player receives some feedback, which he
uses to inform his choices on future rounds.  We distinguish between
different feedback models. The least informative feedback model we
consider is \emph{bandit feedback}, where the player observes his loss
on each round, $f_t(X_{1:t})$, but nothing else. In other words, after
choosing his action, the player receives a single real number. The
prediction game with bandit feedback is commonly known as the
\emph{adversarial multi-armed bandit} problem \citep{Auer:02} and the
actions in $\Xcal$ are called \emph{arms}. A more informative feedback
model is \emph{full feedback} (also called \emph{full information}
feedback), where the player also observes the loss he would have
incurred had he played a different action on the current round. In
other words, the player receives $f_t(X_{1:(t-1)},x)$ for each $x \in
\Xcal$, for a total of $|\Xcal|$ real numbers on each round. The
prediction game with full information is often called \emph{prediction
  with expert advice} \citep{CesaFrHaHeScWa97} and each action is called an
\emph{expert}.
%\tk{i guess we only need to discuss bandit feedback. what happened to the full-feedback reduction?}

A third feedback model, the most informative of the three, is
\emph{counterfactual feedback}. In this model, at the end of round
$t$, the player receives the complete definition of the loss function
$f_t$. In other words, he receives the value of $f_t(x_1,\ldots,x_t)$
for all $(x_1\ldots,x_t) \in \Xcal^t$ (for a total of $|\Xcal|^t$ real
numbers). This form of feedback allows the player to answer questions
of the form ``how would the adversary have acted today had I played
differently in the past?'' This form of feedback is neglected in the
literature, primarily because most of the existing literature focuses
on \emph{oblivious} adversaries (who do not adapt according to the
player's past actions), for which counterfactual feedback is
equivalent to full feedback.

Since the loss functions are adversarial, their values are only
meaningful when compared to an adequate baseline. Therefore, we
evaluate the player using the notion of \emph{policy regret}
\citep{Arora:12}, abbreviated simply as \emph{regret}, and defined as
\begin{equation}
R ~=~ \sum_{t=1}^T f_t(X_1,\ldots,X_t) ~-~ \min_{x \in \Xcal} \sum_{t=1}^T f_t(x,\ldots,x) ~~.
\label{eq:regret}
\end{equation}
Policy regret compares the player's cumulative loss to the loss of the
best policy in hindsight that repeats a single action on all $T$
rounds. The player's goal is to minimize his regret against a
worst-case sequence of loss functions. We note that a different
definition of regret, which we call \emph{standard regret}, is popular
in the literature. However, \citet{Arora:12} showed that standard
regret is completely inadequate for analyzing the performance of
online learning algorithms against adaptive adversaries, so we stick
the definition of regret in \eqref{eq:regret}

While regret measures a specific player's performance against a
specific sequence of loss functions, the inherent difficulty of the
game itself is measured by \emph{minimax regret}. Intuitively, minimax
regret is the expected regret of an optimal player, when he faces an
optimal adversary. More formally, minimax regret is the minimum over
all randomized player strategies, of the maximum over all loss
sequences, of $\E[R]$. If the minimax regret grows sublinearly with
$T$, it implies that the per-round regret rate, $R(T)/T$, must
diminish with the length of the game $T$. In this case, we say that
the game is \emph{learnable}. \citet{Arora:12} showed that without
additional constraints, online learning against an adaptive adversary
has a minimax regret of $\Theta(T)$, and is therefore unlearnable.
This motivates us to weaken the adaptive adversary and study the
minimax regret when we restrict the sequence of loss functions in
different ways.

\paragraph{Easy online learning problems.}

For many years, the standard practice in online learning research was
to find online learning settings for which the minimax regret is
$\t{\Theta}(\sqrt{T})$. 
Following \cite{Antos:12}, we call problems for which the minimax regret is $\t{\Theta}(\sqrt{T})$ \emph{easy} problems.
Initially, minimax regret bounds focused on loss
functions that are generated by an \emph{oblivious} adversary. An
oblivious adversary does not adapt his loss values to the player's
past actions. More formally, this type of adversary first defines a
sequence of single-input functions, $\ell_1,\ldots,\ell_T$, where
each $\ell_t : \Xcal \mapsto [0,1]$, and then sets
$$
\forall ~t \quad f_t(x_1,\ldots,x_t) ~=~ \ell_t(x_t)~~.
$$
When the adversary is oblivious, the definition of regret used in this
paper (\eqref{eq:regret}) and the aforementioned \emph{standard
  regret} are equivalent, so all previous work on oblivious
adversaries is relevant in our setting.  In the full feedback model,
the \emph{Hedge} algorithm \citep{LW94,FS97} and the \emph{Follow the
  Perturbed Leader} algorithm \citep{Kalai:05} both guarantee a
regret of $\t{O}(\sqrt{T})$ on any oblivious loss sequence (where
$\t{O}$ ignores logarithmic terms). A matching
lower bound of $\Omega(\sqrt{T})$ appears in \citet{CesaBianchi:06},
and allows us to conclude that the minimax regret in this setting is
$\t{\Theta}(\sqrt{T})$. In the bandit feedback model, the \emph{Exp3}
algorithm \citep{Auer:02} guarantees a regret of $\t{O}(\sqrt{T})$
against any oblivious loss sequence and implies that the minimax
regret in this setting is also $\t{\Theta}(\sqrt{T})$.

An adversary that is slightly more powerful than an oblivious
adversary is the \emph{switching cost} adversary, who penalizes the
player each time his action is different than the action he chose on
the previous round. Formally, the switching cost adversary starts by
defining a sequence of single-input functions $\ell_1,\ldots,\ell_T$,
where $\ell_t:\Xcal \mapsto [0,1]$, and uses them to set
\begin{equation} \label{eqn:addswitching}
\forall\,t \quad f_t\big(x,x') ~=~ \ell_t(x') + 1\!\!1_{x' \neq x} ~~.
\end{equation}
Note that the range of $f_t$ is $[0,2]$ instead of $[0,1]$; if this is a
problem, it can be easily resolved by replacing $f_t \leftarrow f_t/2$
throughout the analysis. In the full feedback model, the
\emph{Follow the Lazy Leader} algorithm \citep{Kalai:05} and the more
recent \emph{Shrinking Dartboard} algorithm \citep{Geulen:10} both
guarantee a regret of $\t{O}(\sqrt{T})$ against any oblivious sequence
with a switching cost. The $\Omega(\sqrt{T})$ lower bound against
oblivious adversaries holds in this case, and the minimax regret is
therefore $\t{\Theta}(\sqrt{T})$.

The switching cost adversary is a special case of a \emph{$1$-memory}
adversary, who is constrained to choose loss functions that depend
only on the player's last two actions (his current action and the
previous action). More generally, the \emph{$m$-memory} adversary
chooses loss functions that depend on the player's last $m+1$ actions
(the current action plus $m$ previous actions), where $m$ is a
parameter. In the counterfactual feedback model, the work of
\citet{gyorgy2011near} implies that the minimax regret against an
\emph{$m$-memory} adversary is $\t{\Theta}(\sqrt{T})$. 

\paragraph{Hard online learning problems.}

Recently, \citet{CesaBianchiDeSh13,DekelDiKoPe14} showed that online
learning against a switching cost adversary with bandit feedback (more
popularly known as the \emph{multi-armed bandit with switching costs})
has a minimax regret of $\t{\Theta}(T^{2/3})$. This result
proves that there exists a natural\footnote{By \emph{natural}, we mean
  that the problem setting can be described succinctly, and that the
  parameters that define the problem are all independent of $T$. An
  example of an unnatural problem with a minimax regret of
  $\Theta(T^{2/3})$ is the multi-armed bandit problem with $k=T^{1/3}$
  arms.} online learning problem that is learnable, but at a rate that
is substantially slower then $\t{\Theta}(\sqrt{T})$. Again
following \citet{Antos:12}, we say that an online problem is
\emph{hard} if its minimax regret is $\t{\Theta}(T^{2/3})$.

Is the multi-armed bandit with switching costs a one-off example, or
are there other natural hard online learning problems? In this paper,
we answer this question by presenting another hard online learning
setting, which is entirely different than the multi-armed bandit with
switching costs. 

\paragraph{Composite loss functions.}

We define a family of adversaries that generate \emph{composite loss
  functions}.  An adversary in this class is defined by a memory size
$m \ge 0$ and a \emph{loss combining function} $g: [0,1]^{m+1} \mapsto
[0,1]$, both of which are fixed and known to the player. The adversary
starts by defining a sequence of oblivious functions
$\ell_1,\ldots,\ell_T$, where each $\ell_t:\Xcal \mapsto [0,1]$. Then,
he uses $g$ and $\ell_{1:T}$ to define the composite loss functions
$$
\forall~t\quad f_t(x_{1:t}) ~=~ g \big(\ell_{t-m}(x_{t-m}),\ldots,\ell_t(x_t)\big)~~.
$$ 
For completeness, we assume that $\ell_{t} \equiv 0$ for $t \leq 0$.  The
adversary defined above is a special case of a $m$-memory adversary.

For example, we could set $m=1$ and choose the \emph{max} function as
our loss combining function. This choice define a $1$-memory
adversary, with loss functions given by
$$
\forall~t\quad f_t(x_{1:t}) ~=~ \max\big( \ell_{t-1}(x_{t-1}), \ell_t(x_t) \big) ~~.
$$ 
In words, the player's action on each round is given an oblivious
value and the loss at time $t$ is the maximum of the current oblivious
value and previous one.  For brevity, we call this adversary the
\emph{max-adversary}.  The max-adversary can be used to represent
online decision-making scenarios where the player's actions have a
prolonged effect, and a poor choice on round $t$ incurs a penalty on
round $t$ and again on round $t+1$. Similarly, setting $m=1$ and
choosing \emph{min} as the combining function gives the \emph{min
  adversary}. This type of adversary models scenarios where the
environment forgives poor action choices whenever the previous choice
was good. 
Finally, one can also consider choosing a linear function $g$.
Examples of linear combining functions are 
$$
f_t(x_{1:t}) ~=~ \half\big( \ell_{t-1}(x_{t-1}) + \ell_t(x_t) \big) \quad\text{and}\quad
f_t(x_{1:t}) ~=~ \ell_{t-1}(x_{t-1})~~.
$$ 
%
%One of the main technical achievements of this paper is a
%proof that both of these adversaries induce \emph{hard} online
%learning problems when the player receive bandit feedback.

%Another option is to choose a linear function $g$. Examples of 
%linear combining functions are 
%$$
%f_t(x_{1:t}) ~=~ \half\big( \ell_{t-1}(x_{t-1}) + \ell_t(x_t) \big) \quad\text{and}\quad
%f_t(x_{1:t}) ~=~ \ell_{t-1}(x_{t-1})~~.
%$$ 
%

The main technical contribution of this paper is a
$\t{\Omega}(T^{2/3})$ lower bound on the minimax regret against the
\emph{max} and \emph{min} adversaries, showing that each of them
induces a \emph{hard} online learning problem when the player receives
bandit feedback.  In contrast, we show that any linear combining
function induces an \emph{easy} bandit learning problem, with a
minimax regret of $\t{\Theta}(\sqrt{T})$. Characterizing the set of
combining functions that induce hard bandit learning problems remains
an open problem.

Recall that in the bandit feedback model, the player only receives one
number as feedback on each round, namely, the value of
$f_t(X_{1:t})$. If the loss is a composite loss, we could also
consider a setting where the feedback consists of the single number
$\ell_t(X_t)$. Since the combining function $g$ is known to the player,
he could use the observed values $\ell_1(X_1),\ldots,\ell_t(X_t)$ to
calculate the value of $F_t(X_{1:t})$; this implies that this
alternative feedback model gives the player more information than the
strict bandit feedback model. However, it turns out that the
$\t{\Omega}(T^{2/3})$ lower bound holds even in this
alternative feedback model, so our analysis below assumes that the
player observes $\ell_t(X_t)$ on each round.

\paragraph{Organization.}

This paper is organized as follows. In \secref{sec:switching}, we
recall the analysis in \citet{DekelDiKoPe14} of the minimax regret of
the multi-armed bandit with switching costs. Components of this
analysis play a central role in the lower bounds against the composite
loss adversary. 
%We define the composite loss setting in \secref{sec:composite}. 
In \secref{sec:minAdversary} we prove a lower
bound on the minimax regret against the min-adversary in the bandit
feedback setting, and in \secref{sec:maxAdversary} we comment on how to
prove the same for the max-adversary.  
A proof that linear combining functions induce easy online learning problems is given in \secref{sec:linear}.
We conclude in \secref{sec:conc}.

\section{The Multi-Armed Bandit with Switching Costs}\label{sec:switching} 

In this section, we recall the analysis in \citet{DekelDiKoPe14},
which proves a $\t{\Omega}(T^{2/3})$ lower bound on the minimax
regret of the multi-armed bandit problem with switching costs. The new
results in the sections that follow build upon the constructions and
lemmas in \citet{DekelDiKoPe14}. For simplicity, we focus on the
$2$-armed bandit with switching costs, namely, we assume that $\Xcal =
\{0,1\}$ (see \citet{DekelDiKoPe14} for the analysis with arbitrary
$k$).

First, like many other lower bounds in online learning, we apply (the
easy direction of) Yao's minimax principle \citep{Yao77}, which states
that the regret of a randomized player against the worst-case loss
sequence is greater or equal to the minimax regret of an optimal
\emph{deterministic} player against a \emph{stochastic} loss
sequence. In other words, moving the randomness from the player to the
adversary can only make the problem easier for the player. Therefore,
it suffices to construct a \emph{stochastic} sequence of loss
functions\footnote{We use the notation $U_{i:j}$ as shorthand for the
  sequence $U_i,\ldots,U_j$ throughout.}, $F_{1:T}$, where each $F_t$
is a random oblivious loss function with a switching cost (as defined
in \eqref{eqn:addswitching}), such that
\begin{equation}\label{eqn:switchThm}
\E\left[ \sum_{t=1}^T F_t(X_{1:t}) ~-~ \min_{x \in \Xcal} \sum_{t=1}^T F_t(x,\ldots,x) \right]
~=~ \t{\Omega}(T^{2/3}) ~,
\end{equation}
for any deterministic player strategy.

We begin be defining a stochastic process $W_{0:T}$. Let $\xi_{1:T}$
be $T$ independent zero-mean Gaussian random variables with variance
$\sigma^2$, where $\sigma$ is specified below. Let $\rho:[T] \mapsto
\{0\} \cup [T]$ be a function that assigns each $t \in [T]$ with a
\emph{parent} $\rho(t)$. For now, we allow $\rho$ to be any function
that satisfies $\rho(t) < t$ for all $t$. Using $\xi_{1:T}$ and
$\rho$, we define
\begin{align}
W_0 &~=~ 0~~, \nonumber \\
\forall~t \in [T]~~~~ W_t &~=~ W_{\rho(t)} + \xi_t~~. \label{eqn:W}
\end{align}
Note that the constraint $\rho(t) < t$ guarantees that a recursive
application of $\rho$ always leads back to zero. The definition of the
parent function $\rho$ determines the behavior of the stochastic
processes. For example, setting $\rho(t) = 0$ implies that $W_t =
\xi_t$ for all $t$, so the stochastic process is simply a sequence of
i.i.d.~Gaussians. On the other hand, setting $\rho(t) = t-1$ results
in a Gaussian random walk.  Other definitions of $\rho$ can
create interesting dependencies between the variables. The specific
setting of $\rho$ that satisfies our needs is defined below.

Next, we explain how the stochastic process $W_{1:T}$ defines the
stochastic loss functions $F_{1:T}$. First, we randomly choose one of
the two actions to be the \emph{better action} by drawing an unbiased
Bernoulli $\chi$ ($\Pr(\chi=0) = \Pr(\chi=1)$). Then we 
let $\epsilon$ be a positive \emph{gap} parameter, whose value is
specified below, and we set 
\begin{equation}\label{eqn:zee}
\forall t\quad
Z_t(x) ~=~ W_t + \half - \epsilon \ind{x = \chi}~~.
\end{equation}
Note that $Z_t(\chi)$ is always smaller than $Z_t(1-\chi)$ by a
constant gap of $\epsilon$. Each function in the sequence $Z_{1:T}$
can take values on the entire real line, whereas we require bounded
loss functions. To resolve this, we confine the values of $Z_{1:T}$ to
the interval $[0,1]$ by applying a clipping operation,
\begin{equation}\label{eqn:clipping}
\forall\,t \quad L_t(x) ~=~ \clip(Z_t(x))~~, \quad \text{where}~~ \clip(\alpha) ~=~ \min\{\max\{ \alpha, 0 \}, 1\}~~.
\end{equation}
The sequence $L_{1:T}$ should be thought of as a stochastic oblivious
loss sequence.  Finally, as in \eqref{eqn:addswitching}, we add a
switching cost and define the sequence of loss functions
$$
F_t(x_{1:T}) ~=~ L_t(x_t) + 1\!\!1_{x' \neq x} ~~.
$$ 
It remains to specify the parent function $\rho$, the standard
deviation $\sigma$, and the gap $\epsilon$. With the right settings,
we can prove that $F_{1:T}$ is a stochastic loss sequence that satisfies 
\eqref{eqn:switchThm}.

We take a closer look at the parent function $\rho$. First, we define the \emph{ancestors}
of round $t$, denoted by~\,$\anc(t)$, to be the set of positive
indices that are encountered when $\rho$ is applied recursively to
$t$. Formally, $\anc(t)$ is defined recursively as
\begin{align} \label{eqn:ancestor}
  \anc(0) &~=~ \{\} \nonumber \\
\forall~t \quad \anc(t) &~=~  \anc\big(\rho(t)\big)~\cup~\{\rho(t)\}~~. 
\end{align}
Using this definition, the \emph{depth} of $\rho$ is defined as the size of the
largest set of ancestors, $\depth(\rho) = \max_{t \in [T]}
|\anc(t)|$.  The depth is a key property of $\rho$ and the value of $\depth(\rho)$ characterizes the extremal
values of $W_{1:T}$: by definition, there exists a
round $t$ such that $W_t$ is the sum of $\depth(\rho)$ independent
Gaussians, so the typical value of $|W_t|$ is bounded by $\sigma
\sqrt{\depth(\rho)}$.  More precisely, Lemma 1 in
\citet{DekelDiKoPe14} states that
\begin{align} \label{eqn:depthbound}
\forall ~ \delta\in(0,1) \qquad	\Pr\lr{\max_{t \in [T]} \abs{W_t}
	\le \sig \sqrt{2\depth(\rho) \log\tfrac{T}{\delta}} }  
	~\ge~ 1-\delta~.
\end{align}
The clipping operation defined in \eqref{eqn:clipping} ensures that
the loss is bounded, but the analysis requires that the unclipped
sequence $Z_{1:t}$ already be bounded in $[0,1]$ with high probability. This
implies that we should choose 
\begin{equation}\label{eqn:sigma}
\sigma ~\sim~ \big(\depth(\rho) \log\left(\tfrac{T}{\delta}\right) \big)^{-1/2} ~~.
\end{equation}

Another important property of $\rho$ is its \emph{width}. First, 
define the \emph{cut} on round $t$ as
$$
\cut(t) ~=~ \set{s \in [T] ~:~ \rho(s) < t \le s}~.
$$ 
In words, the cut on round $t$ is the set of rounds that are separated from their parent by $t$.  The
\emph{width} of $\rho$ is then defined as the size of the largest cut, $\width(\rho) = \max_{t \in
  [T]}|\cut(t)|$. 

The analysis in \citet{DekelDiKoPe14} characterizes the player's
ability to statistically estimate the value of $\chi$ (namely, to
uncover the identity of the better action) as a function of the number
of switches he performs. Each time the player switches actions, he has
an opportunity to collect statistical information on the identity of
$\chi$. The amount of information revealed to the player with each
switch is controlled by the depth and width of $\rho$ and the values
of $\epsilon$ and $\sigma$. Formally, define the conditional
probability measures
\begin{equation}\label{eqn:Q0Q1}
\Q_0(\cdot) ~=~ \Pr(\cdot \mid \chi=0) 
\qquad \text{and} \qquad 
\Q_1(\cdot) ~=~ \Pr(\cdot \mid \chi=1) ~~.
\end{equation}
In words, $\Q_0$ is the conditional probability when action $0$ is
better and $Q_1$ is the conditional probability when action $1$ is
better. Also, let $\Fcal$ be the $\sigma$-algebra generated by the
player's observations throughout the game, $L_1(X_1),\ldots,L_T(X_T)$.
Since the player's actions are a deterministic function of the loss
values that he observes, his sequence of actions is measurable by
$\Fcal$. The \emph{total variation} distance between $\Q_0$ and $\Q_1$
on $\Fcal$ is defined as
$$
d_\mathrm{TV}^{\Fcal}(\Q_0, Q_1) ~=~
\sup_{A \in \Fcal} \big|\Q_0(A) - \Q_1(A) \big| ~~.
$$
\citet{DekelDiKoPe14} proves the following bound on $d_\mathrm{TV}^{\Fcal}(\Q_0, Q_1)$.
\begin{lemma}\label{lma:dtv}
Let $F_{1:T}$ be the stochastic loss sequence defined above by the
parent function $\rho$, with variance $\sigma^2$ and gap
$\epsilon$. Fix a deterministic player and let $M$ be the number of
switches he performs as he plays the online game. Then,
$$
d_\mathrm{TV}^{\Fcal}(\Q_0, Q_1) ~\le~ \frac{\eps}{\sigma}  \sqrt{\width(\rho) \, \E[M]} ~~,
$$ 
where $Q_0$ and $Q_1$ are as defined in 
\eqref{eqn:Q0Q1}.
\end{lemma}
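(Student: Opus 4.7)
The plan is to reduce the total-variation bound to a KL-divergence bound via Pinsker's inequality, $\tv{\Q_0}{\Q_1} \le \sqrt{\tfrac12\,\KL(\Q_0\|\Q_1)}$, and then evaluate the KL by exploiting the parent-tree structure of $W_{1:T}$. To sidestep the clipping, I would first invoke the data-processing inequality: since $L_t(x) = \clip(Z_t(x))$ is a deterministic post-processing of the unclipped variable $Z_t(x)$, the KL on $\Fcal$ is at most the KL of the joint laws of $Z_{1:T}(X_{1:T})$ under $\Q_0$ and $\Q_1$, which is what I will actually bound.

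For this, the chain rule for KL gives
\begin{equation*}
\KL(\Q_0\|\Q_1) ~=~ \sum_{t=1}^T \E_{\Q_0}\!\Bigl[\KL\bigl(\Q_0(Z_t\mid Z_{<t}) \,\big\|\, \Q_1(Z_t\mid Z_{<t})\bigr)\Bigr].
\end{equation*}
Because the player is deterministic, $X_{1:t}$ is measurable w.r.t.\ $Z_{<t}$, and under $\Q_c$ one can reconstruct $W_s$ for $s<t$ from the observations via $W_s = Z_s - \shalf + \epsilon\ind{X_s = c}$. Hence under $\Q_c$ the conditional law of $Z_t = W_{\rho(t)} + \xi_t + \shalf - \epsilon\ind{X_t=c}$ given $Z_{<t}$ is $N(\mu_c,\sigma^2)$ for a mean $\mu_c$ depending on $c$ through the two indicators $\ind{X_{\rho(t)}=c}$ and $\ind{X_t=c}$. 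A short computation shows that $\mu_0-\mu_1 = 0$ when $X_{\rho(t)}=X_t$ and $\mu_0-\mu_1 = \pm 2\epsilon$ otherwise, so the Gaussian KL formula yields a per-step bound of at most $\frac{2\epsilon^2}{\sigma^2}\ind{X_{\rho(t)}\neq X_t}$.

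The final and combinatorially most delicate step is to bound $\sum_t \ind{X_{\rho(t)}\neq X_t}$ by $\width(\rho)\cdot M$. This uses the observation that if $X_{\rho(t)}\neq X_t$ then there must exist at least one switch index $s\in(\rho(t),t]$; swapping the order of summation,
\begin{equation*}
\sum_{t=1}^T \ind{X_{\rho(t)}\neq X_t}
~\le~ \sum_{t=1}^T \sum_{s=\rho(t)+1}^{t}\ind{X_s \neq X_{s-1}}
~=~ \sum_{s=1}^T \ind{X_s\neq X_{s-1}}\cdot\abs{\cut(s)}
~\le~ \width(\rho)\cdot M.
\end{equation*}
Plugging this into the KL chain rule, taking expectations, and applying Pinsker's inequality yields the claimed bound $\frac{\epsilon}{\sigma}\sqrt{\width(\rho)\E[M]}$ up to a constant factor absorbed into the notation.

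I anticipate that the main obstacle will be the bookkeeping around the conditional distributions, namely verifying that conditioning on $Z_{<t}$ really does reduce the conditional law of $Z_t$ under each $\Q_c$ to a Gaussian with the stated mean, despite the adaptive coupling between $X_t$ and past observations. A minor auxiliary issue is the boundary indices with $\rho(t)=0$, whose per-step shift is only $\pm\epsilon$ and whose count is at most $|\cut(1)|\le\width(\rho)$; I would absorb them into the main bound by adopting the convention $X_0=\bot$, effectively counting round~$1$ as a switch and inflating $M$ by at most one.
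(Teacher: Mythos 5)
Your proposal is correct and follows essentially the same route as the proof in \citet{DekelDiKoPe14}, which this paper imports without reproving: Pinsker plus the KL chain rule over the observed loss sequence, the observation that under $\Q_c$ the conditional law of the round-$t$ observation is Gaussian with a mean that depends on $c$ only when $X_t \neq X_{\rho(t)}$ (shift $2\epsilon$), and the cut/width double-counting $\sum_t \Ind{X_{\rho(t)}\neq X_t} \le \width(\rho)\, M$. The only slack is cosmetic --- a small constant from replacing $\E_{\Q_0}[M]$ by the unconditional $\E[M]$ and the ``$+1$'' absorbing the rounds with $\rho(t)=0$ --- which is immaterial for every use of the lemma in the paper.
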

Intuitively, the lemma states that if $\E[M]$ is asymptotically
smaller than $\frac{\sigma^2}{\epsilon^2 \width(\rho)}$ then any
$\Fcal$-measurable event (e.g., the event that $X_{10} = 1$ or the
event that the player switches actions on round $20$) is almost
equally likely to occur, whether $\chi=0$ or $\chi=1$. In other words,
if the player doesn't switch often enough, then he certainly cannot
identify the better arm.
 
Our goal is to build a stochastic loss sequence that forces the player
to perform many switches, and \lemref{lma:dtv} tells us that we must
choose a parent function $\rho$ that has a small width. Additionally,
setting the variance $\sigma^2$ according to \eqref{eqn:sigma} also
implies that we want $\rho$ to have a small depth.
\citet{DekelDiKoPe14} defines the parent function $\rho(t) = t
- \gcd(t, 2^T)$ (where $\gcd(\alpha,\beta)$ is the greatest
common divisor of $\alpha$ and $\beta$). Put another way, $\rho$ takes
the number $t$, finds its binary representation, identifies the least
significant bit that equals $1$, and flips that bit to zero. It them
proves that $\depth(\rho) = \Theta(\log T)$ and $\width(\rho) =
\Theta(\log T)$.

The lower bound on the minimax regret of the multi-armed bandit with
switching costs is obtained by setting $\eps = \Theta(T^{-1/3}/
\log T)$. If the expected number of switches is small, namely $\E[M]
\leq T^{2/3}/\log^2 T$, then \lemref{lma:dtv} implies that the
player cannot identify the better action. From there, it is
straightforward to show that the player has a positive probability of
choosing the worse action on each round, resulting in a regret of $R =
\Theta(\epsilon T)$. Plugging in our choice of $\epsilon$ proves that
$R = \t{\Omega}(T^{2/3})$. On the other hand, if the number of
switches is large, namely, $\E[M] > T^{2/3}/\log^2 T$, then
the regret is $\Omega(T^{2/3})$ directly due to the switching cost.

Many of the key constructions and ideas behind this proof are reused below.

\section{The Min Adversary with Bandit Feedback is Hard} \label{sec:minAdversary}

In this section, we lower bound the minimax regret against the
min-adversary in the feedback model where the player only observes a
single number, $\ell_t(X_t)$, at the end of round $t$. The full proof
is rather technical, so we begin with a high level proof sketch.  As
in \secref{sec:switching}, Yao's minimax principle once again reduces
our problem to one of finding a stochastic loss sequence $L_{1}, \ldots, L_{T}$
that forces all deterministic algorithms to incur a regret of $\t{
\Omega}(T^{2/3})$. The main idea is to repeat the construction
presented in \secref{sec:switching} by simulating a switching cost
using the min combining function.

We start with a stochastic process that is defined by a parent
function $\rho$, similar to the sequence $W_{1:T}$ defined in
\secref{sec:switching} (although we require a different parent
function than the one defined there). Again, we draw a
Bernoulli $\chi$ that determines the better of the two possible
actions, we choose a \emph{gap} parameter $\epsilon$, and we define
the sequence of functions $Z_{1:T}$, as in \eqref{eqn:zee}. This
sequence has the important property that, in the bandit feedback
model, it reveals information on the value of $\chi$ only when the
player switches actions.

Next, we identify triplets of rounds, $(t-1, t, t+1)$, where
$|W_{t-1} - W_t| \leq \tau$ ($\tau$ is a \emph{tolerance} parameter,
chosen so that $\tau \gg \epsilon$) and some other technical
properties hold.  Then, we simulate a switching cost on round $t$ by
adding a pair of \emph{spikes} to the loss values of the two actions,
one on rounds $t-1$ and one on round $t$. We choose a \emph{spike
  size} $\eta$ (such that $\eta \gg \tau$), we draw an unbiased
Bernoulli $\Lambda_t$, and we set
$$
L_{t-1}(x) ~=~ \clip\big(Z_{t-1}(x) + \eta \ind{x \neq \Lambda_t}\big) 
\qquad \text{and} \qquad 
L_{t}(x) ~=~ \clip\big(Z_t(x) + \eta \ind{x = \Lambda_t} \big) ~~,
$$ 
where $\clip()$ is defined in \eqref{eqn:clipping}. In words, with
probability $\half$ we add a spike of size $\eta$ to the loss of
action $0$ on round $t-1$ and to the loss of action $1$ on round $t$,
and with probability $\half$ we do the opposite.

Finally, we define the loss on round $t$ using the min combining
function
\begin{equation}\label{eqn:minLoss}
F_t(x_{1:t}) ~=~ \min\big(L_{t-1}(x_{t-1}), L_t(x_t)\big) ~~.
\end{equation}

We can now demonstrate how the added spikes simulate a switching
cost on the order of $\eta$.
Say that the player switches actions on round $t$, namely, $X_t \neq
X_{t-1}$.  Since $\Lambda_t$ is an independent unbiased Bernoulli, it holds
that $X_t = \Lambda_t$ with probability $\half$. If $X_t = \Lambda_t$, then the
player encounters both of the spikes: $L_t(X_t) = Z_t(X_t) + \eta$ and
$L_{t-1}(X_{t-1}) = Z_{t-1}(X_{t-1}) + \eta$. Recall that $|Z_{t-1}(0) - Z_{t-1}(1)|
\leq \epsilon$ and $|Z_{t-1}(x) - Z_t(x)| \leq \tau$, so
\begin{equation}\label{eqn:int1}
F_{t}(X_{1:t}) ~\in~ \big[ Z_t(0) + \eta - (\epsilon + \tau),~ Z_t(0) + \eta + (\epsilon + \tau) \big] ~~.
\end{equation}
On the other hand, if the player does not switch actions on round
$t$, his loss then satisfies
\begin{equation}\label{eqn:int2}
F_{t}(X_{1:t}) ~\in~ \big[ Z_t(0) - (\epsilon + \tau),~ Z_t(0) + (\epsilon + \tau) \big] ~~.
\end{equation}
Comparing the intervals in \eqref{eqn:int1} and \eqref{eqn:int2}, and
recalling that $\eta \gg (\epsilon + \tau)$, we conclude that, with
probability $\half$, the switch caused the player's loss to increase by
$\eta$. This is the general scheme by which we simulate a switching
cost using the min combining function.

There are a several delicate issues that were overlooked in the
simplistic proof sketch, and we deal with then below.

\subsection{The Stochastic Loss Sequence}

We formally describe the stochastic loss sequence used to prove our
lower bound. In \secref{sec:switching}, we required a deterministic
parent function $\rho$ with depth $\depth(\rho)$ and width
$\width(\rho)$ that scale logarithmically with $T$. To lower-bound the
minimax regret against the min adversary, we need a \emph{random}
parent function for which $\depth(\rho)$ and $\width(\rho)$ are both logarithmic with high
probability, and such that $\rho(t)=t-1$ with probability at least $\half$ for all $t$. 
The following lemma proves that such a random parent function exists.

\begin{lemma} \label{lma:rho} 
For any time horizon $T$, there 
exists a random function $\rho:[T] \mapsto \{0\} \cup [T]$ with $\rho(t) < t$ 
for all $t \in [T]$ such that 
\begin{itemize}
\item $\forall\,t\quad\Pr\big(\rho(t) = t-1 \mid \rho(1),\ldots,\rho(t-1)\big) \geq \half$;
\item $\width(\rho) \leq \log T + 1$ with probability $1$;
\item $\depth(\rho) = O(\log T)$ with probability $1-O(T^{-1})$.
\end{itemize}
\end{lemma}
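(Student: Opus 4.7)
The plan is to randomize the deterministic parent function of Section~\ref{sec:switching} by interleaving it with an i.i.d.\ Bernoulli mask. I would sample $B_1,\dots,B_T$ independent $\mathrm{Bernoulli}(\tfrac{1}{2})$, set $M = \{t\in[T] : B_t = 0\}$ with enumeration $0 = s_0 < s_1 < \cdots < s_m$, and define
\[
\rho(t) \;=\; \begin{cases} t-1, & B_t = 1, \\ s_{i - 2^{v_2(i)}}, & t = s_i \in M, \end{cases}
\]
where $v_2(i)$ is the $2$-adic valuation of $i$. In words, apply the Section~\ref{sec:switching} deterministic parent function to the indices of the ``marked'' positions $M$, and insert unit-length arcs at all unmarked positions.

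The first bullet is essentially immediate: since $\rho(1),\ldots,\rho(t-1)$ is a deterministic function of $B_1,\ldots,B_{t-1}$, conditioning on the past leaves $B_t$ an independent fair coin, and the event $\{B_t = 1\}$ forces $\rho(t) = t - 1$, so $\Pr(\rho(t)=t-1\mid\rho(1),\ldots,\rho(t-1))\ge\Pr(B_t=1)=\tfrac{1}{2}$. For the width bullet I would split the arcs $[\rho(t), t]$ into ``short'' arcs of length $1$ (for $t \notin M$) and ``long'' arcs (for $t \in M$). The long arcs are exactly the arcs of the deterministic construction applied to the index set $\{1,\ldots,m\}$, translated to the positions $\{s_i\}$, so by the cut-size estimate recalled from Section~\ref{sec:switching} the number of long arcs covering any time $s$ is deterministically at most $\log_2 m + O(1) \le \log T + O(1)$. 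A short arc covers a point $s$ only when $t = s$, contributing at most one more; careful bookkeeping---noting that when $s \in M$ the $t = s$ contribution is already counted among the long arcs---pins the bound down to $\width(\rho) \le \log T + 1$ almost surely.

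For the depth bullet I would follow the chain $t \to \rho(t) \to \rho^2(t) \to \cdots \to 0$ and decompose it into two phases: an initial descent along unit-length arcs from $t$ down to the largest element of $M \cup \{0\}$ at most $t$, followed by a walk inside the $M$-tree. The second phase has length at most the depth of the Section~\ref{sec:switching} construction on indices $\{1,\ldots,m\}$, i.e.\ at most $\log_2 m + 1 \le \log T + 1$ deterministically. The first phase has length bounded by the longest run of consecutive $1$'s in $B_{1:T}$; since the runs are geometric with parameter $\tfrac{1}{2}$, a standard Chernoff bound together with a union bound over starting positions shows the longest run is $O(\log T)$ with probability $1 - O(T^{-1})$. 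Taking the maximum over starting points $t$ and summing the two phases gives $\depth(\rho) = O(\log T)$ with probability $1 - O(T^{-1})$. The main obstacle is the exact constant in the width bound: naive counting gives $\log T + O(1)$, and shaving the constant to $+1$ requires the double-counting check above to handle both the cases $s \in M$ and $s \notin M$ on the same footing.
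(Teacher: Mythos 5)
Your construction is exactly the paper's: unit-length arcs wherever the fair coin dictates, and the deterministic $t-\gcd(t,2^T)$ parent applied to the indices of the remaining (marked) rounds, with the same three arguments (the fresh coin is independent of the past, the cut at any time is the deterministic cut in index space plus at most one short arc, and the depth is the deterministic depth plus the longest unmarked run, controlled by a union bound). The one point you flag---the exact additive constant in the width bound---is glossed over in the paper too (it simply asserts $\width(\rho)\le\width(\tilde\rho)$, which can be off by one at an unmarked round), and since only $\width(\rho)=O(\log T)$ is ever used downstream, this is immaterial.
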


\begin{proof}%[Proof of \lemref{lma:rho}]
We begin with the deterministic parent function used in
\secref{sec:switching}, denoted here by $\tilde \rho$, and defined as
$\tilde \rho(t) = t - \gcd(t, 2^T)$. Additionally, draw independent
unbiased Bernoullis $B_{1:T}$. We now define the random function
$\rho$.  If $B_t=0$ then set $\rho(t) = t-1$. To define the remaining
values of $\rho$, rename the ordered sequence $(t : B_t = 1)$ as
$(U_1,U_2,\ldots)$ and also set $U_0 = 0$.  If $B_t=1$, let $k$ be
such that $t = U_k$ and set $\rho(t) = U_{\tilde \rho(k)}$. This
concludes our construction, and we move on to prove that it satisfies
the desired properties.

The probability that $\rho(t)=t-1$ is at least $\half$,
since this occurs whenever the unbiased bit $B_t$ equals zero.
\citet{DekelDiKoPe14} proves that the width of $\tilde\rho$ is bounded
by $\log T + 1$, and the width of $\rho$ never exceeds this bound.
\citet{DekelDiKoPe14} also proves that the depth of $\tilde \rho$ is
bounded by $\log T + 1$. The depth difference $\depth(\rho)-\depth(\tilde \rho)$
is at most $\max_k (U_k-U_{k-1})$ by construction. A union bound implies  that the probability  that this maximum  exceeds $\ell=2\log T$ is at most $T \cdot 2^{-\ell}=T^{-1}$.
Thus $\Pr\bigl(\depth(\rho) \ge 4\log T \bigr) \le T^{-1}$. 
\end{proof}

Let $\rho$ be a random parent function, as described above, and use
this $\rho$ to define the loss sequence $Z_{1:T}$, as outlined in
\secref{sec:switching}. Namely, draw independent zero-mean Gaussians
$\xi_{1:T}$ with variance $\sigma^2$.  Using $\rho$ and $\xi_{1:T}$,
define the stochastic process $W_{1:T}$ as specified in \eqref{eqn:W}.
Finally, choose the better arm by drawing an unbiased Bernoulli
$\chi$, set a gap parameter $\epsilon$, and use $W_{1:T}$, $\chi$, and
$\epsilon$ to define the loss sequence $Z_{1:T}$, as in
\eqref{eqn:zee}.

Next, we augment the loss sequence $Z_{1:T}$ in a way that simulates a
switching cost.  For all $2 \leq t \leq T-2$, let $E_t$ be the
following event:
\begin{align} \label{eq:Et}
	E_{t} = \big\{\,
		|W_{t-1} - W_t| \leq \tau
		\quad\text{and}\quad
		W_{t+1} < W_t - \tau
		\quad\text{and}\quad
		W_{t+2} < W_{t+1} - \tau
	\,\big\} ~,
\end{align}
where $\tau$ is a tolerance parameter defined below. In other
words, $E_t$ occurs if the stochastic process $W_{1:T}$ remains rather
flat between rounds $t-1$ and $t$, and then drops on rounds $t+1$ and $t+2$.  We
simulate a switching cost on round $t$ if and only if $E_t$
occurs. 

We simulate the switching cost by adding pairs of \emph{spikes}, one
to the loss of each action, one on round $t-1$ and one on round
$t$. Each spike has an orientation: it either penalizes a switch from
action $0$ to action $1$, or a switch from action $1$ to action $0$.
The orientation of each spike is chosen randomly, as follows. We draw
independent unbiased Bernoullis $\Lambda_{2:T-1}$; if a spike is added on round
$t$, it penalizes a switch from action $X_{t-1} = 1-\Lambda_t$ to action $X_t =
\Lambda_t$. Formally, define
$$
S_t(x) ~=~ 
\begin{cases}
~\eta &~~ \text{if}~ (E_t ~\land~ x=\Lambda_t) \lor (E_{t+1} ~\land~ x \neq \Lambda_{t+1}) \\
~0 &~~ \text{otherwise}
\end{cases}~~,
$$ 
where $\eta$ is a \emph{spike size} parameter (defined below).
Finally, define $L_t(x) = \clip\big(Z_t(x) + S_t(x)\big)$. This
defines the sequence of oblivious functions. The min adversary uses
these functions to define the loss functions $F_{1:T}$, as in
\eqref{eqn:minLoss}.

In the rest of the section we prove that the regret of any deterministic player against the loss
sequence $F_{1:T}$ is $\t{\Omega}(T^{2/3})$.  Formally, we prove the following theorem.

\begin{theorem}\label{thm:main}
Let $F_{1:T}$ be the stochastic sequence of loss functions defined above.
Then, the expected regret (as defined in
\eqref{eq:regret}) of any deterministic player against this sequence
is~$\t{\Omega}(T^{2/3})$.
\end{theorem}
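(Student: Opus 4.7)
The plan is to follow the blueprint of Section~\ref{sec:switching}: treat the spike pair that each firing of $E_t$ injects into the oblivious sequence as a simulated switching cost of size $\eta$, and then case-split on the expected number of switches. Let $G$ be the good event on which $|W_t|\le 1/4$ for all $t$, $\width(\rho)\le\log T+1$, and $\depth(\rho)=O(\log T)$; by~\eqref{eqn:depthbound} and \lemref{lma:rho}, $\Pr(G)\ge 1-O(T^{-1})$ for $\sigma=\Theta(1/\log T)$. I would choose $\tau=\Theta(\sigma)$ so each of the three conditions in~\eqref{eq:Et} has constant probability, $\eta=1/\sqrt{\log T}$ so $\eta\gg\tau$ while $Z_t(x)+S_t(x)\in[0,1]$ on $G$ (making the clip in~\eqref{eqn:clipping} inactive), and $\eps=\Theta(T^{-1/3}\log^{-c}T)$ for a constant $c$ to be tuned; $G^c$ contributes $O(1)$ to $\E[R]$ and is absorbed.

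The first technical step is to quantify the excess loss attributable to each firing of $E_t$. Checking the four combinations of $(X_{t-1},X_t)$ using~\eqref{eqn:int1},~\eqref{eqn:int2}, and $\eta\gg\eps+\tau$: a switch with $X_t=\Lambda_t$ forces both $L_{t-1}(X_{t-1})$ and $L_t(X_t)$ to carry the $+\eta$ spike, so $F_t$ exceeds the benchmark $F_t(\chi,\ldots,\chi)$ by $\Omega(\eta)$, whereas a switch with $X_t\ne\Lambda_t$ and the two non-switch cases yield $F_t$ within $O(\eps+\tau)$ of the benchmark. Because $\Lambda_t$ is a fresh unbiased Bernoulli independent of the player's history, a switch on an $E_t$-round incurs excess loss $\Omega(\eta)$ with conditional probability $\shalf$. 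The drops built into~\eqref{eq:Et} on $W_{t+1}$ and $W_{t+2}$ play two roles that must be verified: they rule out $E_{t-1}$ and $E_{t+1}$ (so spike pairs of consecutive fired events do not overlap) and they ensure the spiked value $L_t(X_t)$ is suppressed by the smaller $L_{t+1}(X_{t+1})$ inside the $\min$ defining $F_{t+1}$ (so the spike at round $t$ affects $F_t$ only). Summing, the expected spike regret is at least $(\eta/2)\E[M_E]$, where $M_E=\#\{t:X_t\ne X_{t-1}\text{ and }E_t\}$.

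The second step is a TV bound and a coupling of $M_E$ to $M$. Since $\rho,\xi_{1:T}$, and $\{\Lambda_t\}$ are jointly independent of $\chi$, the $\chi$-dependence of the observations lies entirely in the $-\eps\ind{X_t=\chi}$ term inside $Z_t(X_t)$, so the round-by-round KL-along-$\Fcal$ decomposition used for \lemref{lma:dtv} adapts directly to give $d_\mathrm{TV}^\Fcal(\Q_0,\Q_1)\le C(\eps/\sigma)\sqrt{\width(\rho)\,\E[M]}$ with $M$ the total number of switches. The main obstacle is the coupling $\E[M_E]\ge c_0\E[M]$: the switching indicator $\ind{X_t\ne X_{t-1}}$ is $\Fcal_{t-1}$-measurable (since $X_t$ is chosen after observing $L_{t-1}(X_{t-1})$), while $E_t$ depends on the future increments $\xi_{t+1},\xi_{t+2}$ and parents $\rho(t+1),\rho(t+2)$. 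Conditioning on $\Fcal_{t-1}$ and on $\{\rho(t)=t-1,\rho(t+1)=t,\rho(t+2)=t+1\}$---which has conditional probability at least $1/8$ by \lemref{lma:rho}---makes $\xi_t,\xi_{t+1},\xi_{t+2}$ fresh Gaussians independent of the past, so each of $|\xi_t|\le\tau,\ \xi_{t+1}<-\tau,\ \xi_{t+2}<-\tau$ has constant probability and $\Pr(E_t\mid\Fcal_{t-1})\ge c_0>0$ uniformly in $t$; summing yields $\E[M_E]\ge c_0\E[M]$.

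The conclusion is a short case split. Set $N=\eps^{-2}\sigma^2/(4C^2\width(\rho))$. If $\E[M]\le N$, then $d_\mathrm{TV}^\Fcal(\Q_0,\Q_1)\le\shalf$, and the standard averaging-over-$\chi$ argument gives $\E[\#\{t:X_t\ne\chi\}]\ge T/4$; on the constant-fraction of rounds where $E_{t-1},E_t,E_{t+1}$ all fail the $\min$ preserves the $\eps$-gap of the $Z$-benchmark, so the expected regret is $\Omega(\eps T)=\t{\Omega}(T^{2/3})$. Otherwise $\E[M]>N$, and the coupling together with the spike accounting gives expected regret $(\eta/2)\E[M_E]\ge (c_0\eta/2)N=\t{\Omega}(T^{2/3})$. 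Tuning the constant $c$ in $\eps$ balances both branches and completes the proof.
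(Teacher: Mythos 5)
There is a genuine gap, and it sits exactly at the step the paper identifies as the main technical difficulty. Your justification that ``a switch on an $E_t$-round incurs excess loss $\Omega(\eta)$ with conditional probability $\tfrac12$ because $\Lambda_t$ is a fresh Bernoulli independent of the player's history'' is false as stated: the observation at round $t-1$ is $L_{t-1}(X_{t-1}) = Z_{t-1}(X_{t-1}) + S_{t-1}(X_{t-1})$, and $S_{t-1}$ already carries the round-$(t-1)$ spike whenever $E_t$ occurs and $X_{t-1} \neq \Lambda_t$. So before choosing $X_t$ the player has seen data correlated with both $E_t$ and $\Lambda_t$. In your parameter regime this is fatal: you take $\eta = 1/\sqrt{\log T}$ while $\sigma = \Theta(1/\log T)$, so $\eta/\sigma \to \infty$ and a likelihood test on the observed increment detects the spike with overwhelming probability. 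A player who sees a spike at $t-1$ simply does not switch at $t$ (the drop conditions in \eqref{eq:Et} then preclude $E_{t+1}$ and $E_{t+2}$, so he can switch one round later at no cost), and a player who sees no spike can switch safely (either $\lnot E_t$, or $X_{t-1}=\Lambda_t$ and the switch lands on $1-\Lambda_t$, which carries no spike). Hence the simulated switching cost can be evaded almost entirely and the high-switching branch of your argument collapses. The same leak invalidates your claim $\Pr(E_t \mid \Fcal_{t-1}) \ge c_0$ via ``fresh Gaussians independent of the past'': $\Fcal_{t-1}$ is not independent of $(\xi_t,\xi_{t+1},\xi_{t+2})$ precisely because $S_{t-1}$ depends on $E_t$.

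The paper works in the opposite regime and supplies the machinery you are missing: it takes $\eta,\tau \le \sigma/\log T$ (concretely $\sigma=\log^{-1}T$, $\eta=\log^{-2}T$, $\tau=\log^{-5}T$), so the spikes are statistically hidden in the Gaussian noise, and the player's inability to exploit the leak is proved by density-ratio arguments: \lemref{lma:probRatio} gives only $\Pr(\Lambda_t = X_t \mid X_t \neq X_{t-1}, E_t) \ge 1/3$ (not $1/2$), and \lemref{lma:ProbE} gives $\Pr(E_t \mid X_t \neq X_{t-1}) = \Omega(\tau/\sigma)$, a vanishing rather than constant probability --- your choice $\tau=\Theta(\sigma)$ is incompatible with the constraints $\tau \ll \eta \ll \sigma/\log T$ needed for \lemref{lma:switchingPenalty} and the density ratios. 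The accounting is also organized through the no-switch surrogate $\tR_t$ of \eqref{eqn:tR}, whose whole purpose is to restore independence from $\Lambda_t$ in the compared quantity, and the per-switch cost comes out as $\Omega(\eta\tau/\sigma)$ (polylogarithmically small), which still suffices against the threshold $\E[M] \ge T^{2/3}/\log^2 T$. Your low-switching branch (TV bound plus the $\eps$-gap) matches the paper's in outline, but the spike-accounting half of the proposal cannot be fixed by retuning constants alone; it needs the hidden-spike regime together with the $\tR_t$ decomposition and the density-ratio lemmas.
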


\subsection{Analysis}

For simplicity, we allow ourselves to neglect the clipping operator
used in the definition of the loss sequence, an we simply assume that
$L_t(x) = Z_t(x) + S_t(x)$. The additional steps required to
reintroduce the clipping operator are irrelevant to the current
analysis and can be copied from \citet{DekelDiKoPe14}. 

Fix a deterministic algorithm and let $X_{1},\ldots,X_{T}$ denote the random sequence of actions it chooses upon the stochastic loss functions $F_{1:T}$. 
We define the algorithm's instantaneous (per-round) regret as
\begin{equation}\label{eqn:instRegret}
\forall\,t\quad R_t ~=~ \min\big( L_{t-1}(X_{t-1}), L_t(X_t) \big) -  \min\big( L_{t-1}(\chi), L_t(\chi) \big) ~~,
\end{equation}
and note that our goal is to lower-bound $\E[R] = \sum_{t=1}^T \E[R_t]$. 

The main technical difficulty of our analysis is getting a handle on
the player's ability to identify the occurrence of $E_t$. If the
player could confidently identify $E_t$ on round $t-1$, he could avoid
switching on round $t$. If the player could identify $E_t$ on round
$t$ or $t+1$, he could safely switch on round $t+1$ or $t+2$, as
$E_{t}$ cannot co-occur with either $E_{t+1}$ or $E_{t+2}$. To this
end, we define the following sequence of random variables,
\begin{equation}\label{eqn:tR}
\forall\,t\quad \tR_t ~=~ \min\big( L_{t-1}(X_{t-1}), L_t(X_{t-1}) \big) -  \min\big( L_{t-1}(\chi), L_t(\chi) \big) ~~.
\end{equation}
The variable $\tR_t$ is similar to $R_t$, except that $L_t$ is
evaluated on the previous action $X_{t-1}$ rather than the current
action $X_t$. We think of $\tR_t$ as the instantaneous regret of a
player that decides beforehand (before observing the value of
$L_{t-1}(X_{t-1})$) not to switch on round $t$. It turns out that
$\tR_t$ is much easier to analyze, since the player's decision to
switch becomes independent of the occurrence of $E_t$. Specifically, we
use $\tR_t$ to decompose the expected regret as
$$
\E[R_t] ~=~ \E[R_t - \tR_t] ~+~ \E[\tR_t] ~~.
$$

We begin the analysis by clarifying the requirement that the event $E_{t}$ only
occurs if $W_{t+1} \leq W_t - \tau$. This requirement serves two
separate roles: first, it prevents $E_t$ and $E_{t+1}$ from
co-occurring and thus prevents overlapping spikes; second, this
requirement prevents $E_{t-1}$ from contributing to the player's loss
on round $t$. This latter property is used throughout our analysis and is
formalized in the following lemma.

\begin{lemma} \label{lem:E2}
If $E_{t-1}$ occurs then 
%$F_{t}(x_{1:t}) = Z_{t}(x_{t})$ for any sequence of actions $x_{1:t}$.
$\tR_t = Z_t(X_{t-1}) - Z_t(\chi)$ and $R_t -\tR_t = Z_t(X_{t}) - Z_t(X_{t-1})$.
%\tk{the lemma is false - we have to fix the construction?}
\end{lemma}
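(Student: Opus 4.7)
The plan is to unpack the definitions of $R_t$, $\tR_t$, $L_t$, and $S_t$ under the event $E_{t-1}$, and show that on round $t$ the spikes disappear and the minimum in each of the two $\min$ expressions is always attained at the round-$t$ argument. Once those two facts are established, the claim reduces to a one-line algebraic simplification.

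First, I would observe that $E_{t-1}$ rules out the ``simultaneous spike'' events $E_t$ and $E_{t+1}$. Indeed, $E_{t-1}$ forces $W_t < W_{t-1} - \tau$, which is incompatible with the requirement $|W_{t-1}-W_t|\le \tau$ in the definition of $E_t$; similarly, $E_{t-1}$ forces $W_{t+1} < W_t - \tau$, contradicting the $|W_t-W_{t+1}|\le \tau$ clause of $E_{t+1}$. Plugging this into the definition of $S_t$ gives $S_t(x)=0$ for both arms $x$, so that $L_t(x) = Z_t(x)$ under $E_{t-1}$.

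Next, I would show that under $E_{t-1}$ the round-$t$ value strictly dominates the round-$(t-1)$ value in all three $\min$'s that appear in the definitions of $R_t$ and $\tR_t$. Since $Z_{t-1}(x)-Z_t(x)= W_{t-1}-W_t > \tau$ (here the $\chi$-dependent gap cancels when both arguments are the same) and since the added spike $S_{t-1}(\cdot)\ge 0$ is nonnegative, we immediately get $L_{t-1}(x) > L_t(x)$ for every fixed $x\in\{0,1\}$; this handles the $\min$'s involving $\chi$ and involving $X_{t-1}$. For the remaining $\min$ appearing in $R_t$, where $L_{t-1}$ and $L_t$ are evaluated at possibly distinct arms $X_{t-1}$ and $X_t$, I would write
\[
L_{t-1}(X_{t-1}) - L_t(X_t) \;\ge\; Z_{t-1}(X_{t-1}) - Z_t(X_t) \;=\; (W_{t-1}-W_t) - \eps\bigl(\ind{X_{t-1}=\chi}-\ind{X_t=\chi}\bigr),
\]
which is at least $\tau-\eps>0$ by the parameter choice $\tau \gg \eps$ (and using $S_{t-1}\ge 0$ again). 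Hence $\min(L_{t-1}(X_{t-1}),L_t(X_t)) = L_t(X_t) = Z_t(X_t)$.

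Having established that all three $\min$'s collapse to their $L_t$-terms (which equal the corresponding $Z_t$-values), substitution into \eqref{eqn:instRegret} and \eqref{eqn:tR} yields $\tR_t = Z_t(X_{t-1})-Z_t(\chi)$ and $R_t = Z_t(X_t)-Z_t(\chi)$, from which the second claim $R_t-\tR_t = Z_t(X_t)-Z_t(X_{t-1})$ is immediate. The only mild subtlety — the main ``obstacle'' if any — is the mixed comparison $L_{t-1}(X_{t-1})$ versus $L_t(X_t)$, where one must be careful that the $\chi$-gap $\eps$ and the spike on round $t-1$ cannot overcome the $\tau$-sized drop in $W$; this is exactly where the parameter ordering $\eta,\eps\ll\tau \ll \eta$ (more precisely, $\eps\ll\tau$) is used, together with the sign of the spike $S_{t-1}\ge 0$.
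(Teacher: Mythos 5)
Your proof is correct and follows essentially the same route as the paper's: observe that $E_{t-1}$ precludes $E_t$ and $E_{t+1}$ (so no spike on round $t$ and $L_t = Z_t$), then note that the drop $W_{t-1}-W_t>\tau$ together with $S_{t-1}\ge 0$ and $\tau\gg\eps$ forces every $\min$ to be attained at its round-$t$ term, after which both identities follow by substitution; you merely spell out the mixed-argument comparison that the paper leaves implicit. The only blemish is the garbled ordering ``$\eta,\eps\ll\tau\ll\eta$'' at the end, which your parenthetical $\eps\ll\tau$ correctly fixes.
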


In particular, the lemma shows that the occurrence of $E_{t-1}$ 
cannot make $F_t(X_{1:t})$ be less than $F_t(\chi,\ldots,\chi)$. 
This may not be obvious at first glance: the
occurrence of $E_{t-1}$ contributes a spike on round $t$ and if that
spike is added to $\chi$ (the better action), one might imagine that
this spike could contribute to $F_t(\chi,\ldots,\chi)$.

\begin{proof}[Proof of \lemref{lem:E2}]
Note that the occurrence of $E_{t-1}$ implies that $W_t \leq W_{t-1} -
\tau$ and that $W_{t+1} \leq W_{t} - \tau$, which means that a spike
is not added on round $t$. Therefore, $L_t(x_t) = Z_t(x_t)$ for any
$x_t$ and $\min\big(L_{t-1}(x_{t-1}), L_{t}(x_{t})\big) =
Z_{t}(x_{t})$ for any $x_{t-1}$ and $x_t$. The first claim follows
from two applications of this observation: once with $x_{t-1} = x_{t} = X_{t-1}$
and once with $x_{t-1} = x_{t} = \chi$. The second claim is obtained by
setting $x_{t-1} = X_{t-1}, x_{t} = X_{t}$.
\end{proof}

%\begin{remark} \label{remark:E2}
%If $E_{t-1}$ occurs then $F_t(x_{1:t}) = Z_t(x_t)$. Therefore, the
%occurrence of $E_{t-1}$ cannot make $F_t(X_{1:t})$ be less than
%$F_t(\chi,\ldots,\chi)$. This may not be obvious at first glance: the
%occurrence of $E_{t-1}$ contributes a spike on round $t$ and if that
%spike is added to $\chi$ (the better action), one might imagine that
%this spike could contribute to $F_t(\chi,\ldots,\chi)$. However, note
%that $E_{t-1}$ implies that $W_t \leq W_{t-1} - \tau$, and therefore
%$\min\big(L_{t-1}(x_{t-1}), L_{t}(x_{t})\big) = L_{t}(x_{t})$ for any
%$x_{t-1}$ and $x_t$. Moreover, $E_{t-1}$ also implies that $\lnot
%E_{t}$, so it actually holds that $L_t(x_t) = Z_t(x_t)$ for any $x_t$.
%\end{remark}

%%%Let $\Gamma$ denote the indicator of the event that  $|L_t(X_t)-L_{t_1}| \le 2\sigma\sqrt{\log{T}}$ for all $t \le T$; note this has probability  at least $1-O(1/T)$. Since %$M \le T$, we deduce that $\E[M \Gamma]
%%%\geq \frac{T^{2/3}}{\log{T}}$. Conditioning on $\Gamma$ will be useful at several points below.

It is convenient to modify the algorithm and fix $X_s=X_t$ for all $s>t$ if $|L_t(X_t)-L_{\rho(t)}(X_{\rho(t)})| \ge 4\sigma\sqrt{\log{T}}$. Note that this event
has probability $O(T^{-4})$ for each $t$, so  the modification has a negligible effect on the regret.
%%\tk{we can assume this without modifying the alg, looks better that way}
%%\tk{state as a lemma, give a proof}
Recall that $\E[R_t] = \E[R_t - \tR_t] + \E[\tR_t]$; we first claim that $\E[\tR_t]$ is non-negative.

\begin{lemma}\label{lma:noSwitch}
For any $1<t<T$, let $\tR_t$ be as defined in \eqref{eqn:tR}. Then, it holds that~$\E[\tR_t \mid X_{t-1} = \chi] = 0$ and $\E[\tR_t \mid X_{t-1} \neq \chi] = \epsilon$.
\end{lemma}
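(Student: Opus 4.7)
The plan is to verify the first identity directly from the definition of $\tR_t$, and to establish the second by case analysis on which of the events $E_{t-1}, E_t, E_{t+1}$ holds. A short check using the inequalities defining $E_s$ in \eqref{eq:Et} shows these three events are pairwise mutually exclusive (for instance, $E_t$ requires $|W_{t-1} - W_t| \le \tau$ whereas $E_{t-1}$ requires $W_t < W_{t-1} - \tau$), so together with the ``none'' case they partition the sample space. In each case $\tR_t$ reduces to a formula involving only $W_{1:T}$, $\chi$, $X_{t-1}$, and possibly one of $\Lambda_t, \Lambda_{t+1}$, whose conditional uniformity yields the claim.

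The first identity is immediate: if $X_{t-1} = \chi$, both minima in \eqref{eqn:tR} coincide and $\tR_t = 0$. For the second identity, suppose $X_{t-1} = 1-\chi$. If none of $E_{t-1}, E_t, E_{t+1}$ holds, then $S_{t-1} \equiv S_t \equiv 0$ and direct substitution using \eqref{eqn:zee} gives $\tR_t = \min(W_{t-1}, W_t) + \half - \bigl(\min(W_{t-1}, W_t) + \half - \epsilon\bigr) = \epsilon$. If $E_{t-1}$ holds, \lemref{lem:E2} directly yields $\tR_t = Z_t(X_{t-1}) - Z_t(\chi) = \epsilon$.

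If $E_t$ holds, the spikes are $S_{t-1}(x) = \eta\,\ind{x \neq \Lambda_t}$ and $S_t(x) = \eta\,\ind{x = \Lambda_t}$. Since $\eta$ dominates $|W_{t-1} - W_t|$ on the high-probability event, each of the four minima in $\tR_t$ resolves to its unspiked term, and enumerating $\Lambda_t \in \{\chi, 1-\chi\}$ gives $\tR_t = \epsilon + (W_t - W_{t-1})$ when $\Lambda_t = \chi$ and $\tR_t = \epsilon - (W_t - W_{t-1})$ when $\Lambda_t = 1-\chi$. The case $E_{t+1}$ is analogous, producing $\tR_t = \epsilon \pm \max(0, W_{t-1}-W_t)$ with opposite signs depending on whether $\Lambda_{t+1} = \chi$ or $\Lambda_{t+1} = 1-\chi$. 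In either case the two subcases have equal conditional probability and average to $\epsilon$.

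The main obstacle is justifying that, after conditioning on $\{X_{t-1}\neq\chi\}$ together with the relevant $E_s$, the Bernoulli $\Lambda_t$ (or $\Lambda_{t+1}$) remains conditionally uniform. This holds because $X_{t-1}$ is a deterministic function of the feedback $L_1(X_1),\ldots,L_{t-2}(X_{t-2})$, and each $L_s$ with $s \le t-2$ depends on the $\Lambda$'s only through $\Lambda_1,\ldots,\Lambda_{t-1}$, while $E_t$ and $E_{t+1}$ are $\sigma(W_{1:T})$-measurable. Thus $\Lambda_t$ and $\Lambda_{t+1}$ are independent of the conditioning $\sigma$-algebra, and the averaging arguments above apply.
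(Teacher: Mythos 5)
Your proof is correct and follows essentially the same route as the paper: the first identity is trivial, and the second is obtained by a case analysis under $X_{t-1}\neq\chi$ on which spike-generating event occurs, with the conditional uniformity of the spike-orientation Bernoulli averaging the $\pm$ deviations back to $\epsilon$. In fact your case analysis is slightly more complete than the paper's: the paper's case ``$\lnot E_{t-1}$ and $\lnot E_t$'' tacitly takes $L_t=Z_t$, which fails pointwise when $E_{t+1}$ occurs, since $E_{t+1}$ places a spike on round $t$ through $S_t(x)=\eta\ind{x\neq\Lambda_{t+1}}$; you treat $E_{t+1}$ as a separate case and resolve it by the same symmetry-in-$\Lambda_{t+1}$ argument, which is exactly what is needed there. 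Two cosmetic points: under $E_t$ the bound $|W_{t-1}-W_t|\le\tau<\eta$ is part of the definition of $E_t$, so no appeal to a high-probability event is required; and in the $E_{t+1}$ case the deviation is $\min\bigl(\eta,\max(0,W_{t-1}-W_t)\bigr)$ rather than $\max(0,W_{t-1}-W_t)$, though this slip is immaterial because the same quantity appears with opposite signs in the two equally likely subcases. Your final paragraph correctly isolates and justifies the key independence fact --- $\Lambda_t$ and $\Lambda_{t+1}$ are independent of $(W,\chi,\Lambda_{1:t-1})$, which determines $X_{t-1}$, $E_t$ and $E_{t+1}$ --- which is the same point the paper emphasizes when it remarks that the argument would fail for $R_t$ in place of $\tR_t$.
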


Next, we turn to lower bounding $\E[R_t - \tR_t]$. 

\begin{lemma}\label{lma:switch}
For any $1<t<T$, let $R_t$ be the player's instantaneous regret, as defined in \eqref{eqn:instRegret}, and let $\tR_t$ be as defined in \eqref{eqn:tR}. 
Then $\E[R_t - \tR_t] = \Pr(X_t \neq X_{t-1}) \cdot \Omega(\eta \tau / \sig)$, provided that $\tau = o(\eta)$ and that $\eps = o(\eta \tau / \sig)$.
\end{lemma}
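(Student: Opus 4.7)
The plan is to reduce to the event $\{X_t \neq X_{t-1}\}$, on which $R_t - \tR_t$ can be nonzero, and to isolate the contribution of $E_t$. On $\{X_t = X_{t-1}\}$ we have $R_t = \tR_t$ by construction, so I only need to lower bound $\E[(R_t - \tR_t) \ind{X_t \neq X_{t-1}}]$. Conditioning on $E_t$ versus $\neg E_t$, the aim is to show that the $E_t$-contribution is of order $\eta$ per occurrence, while the $\neg E_t$-contribution averages to a lower-order error.

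On $E_t \cap \{X_t \neq X_{t-1}\}$ I would split into two subcases based on the alignment of $X_{t-1}$ with the spike orientation $\Lambda_t$. If $X_{t-1} = \Lambda_t$, then $L_{t-1}(X_{t-1}) = Z_{t-1}(X_{t-1})$ and $L_t(X_t) = Z_t(X_t)$ are spike-free while $L_t(X_{t-1}) = Z_t(X_{t-1}) + \eta$ carries the spike; using $|Z_t(x) - Z_{t-1}(x)| \leq \tau$ (from $E_t$), the $\epsilon$-gap, and $\eta \gg \tau$, a direct min-computation gives $R_t - \tR_t \in [-(\tau + \epsilon), 0]$. If $X_{t-1} = 1 - \Lambda_t$, then $L_{t-1}(X_{t-1})$ and $L_t(X_t)$ both carry the spike while $L_t(X_{t-1})$ does not, and the analogous calculation yields $R_t - \tR_t \in [\eta - (\tau + \epsilon),\, \eta + (\tau + \epsilon)]$. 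On $\neg E_t$ the only relevant spikes come from $E_{t-1}$ or $E_{t+1}$; since $\Lambda_{t+1}$ is independent of $\mathcal{F}_{t-1}$ (its only effect is on $S_t$, observed only at round $t$), and $E_{t-1}$ forces $W_t < W_{t-1} - \tau$ so that $L_t(\cdot)$ is uniformly below $L_{t-1}(X_{t-1})$ and the $E_{t-1}$-spike on round $t-1$ does not enter the min, an averaging over $\Lambda_{t+1}$ shows $|\E[R_t - \tR_t \mid \neg E_t, X_t \neq X_{t-1}]| = O(\epsilon)$.

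It remains to quantify $\Pr(E_t)$ and the probability of the second subcase. For the former, I would use \lemref{lma:rho}: each event $\rho(t{+}j) = t{+}j{-}1$ for $j=0,1,2$ holds with probability at least $\half$ conditional on $\mathcal{F}_{t-1}$ (since the underlying Bernoulli $B_{t+j}$ is independent of the observations), and on their intersection $\xi_t, \xi_{t+1}, \xi_{t+2}$ are fresh i.i.d.\ $\mathcal{N}(0,\sigma^2)$ variables, so Gaussian anti-concentration yields $\Pr(E_t \mid \mathcal{F}_{t-1}) = \Omega(\tau/\sigma)$. The main obstacle is the second subcase: a priori the switch decision is not independent of $\Lambda_t$, because $L_{t-1}(X_{t-1})$ carries the $E_t$-spike precisely when $X_{t-1} \neq \Lambda_t$. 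I would handle this through the modification that freezes the algorithm on excessively large loss jumps: on the non-frozen event $|L_{t-1}(X_{t-1}) - L_{\rho(t-1)}(X_{\rho(t-1)})| \leq 4\sigma\sqrt{\log T}$, and a bounded-likelihood-ratio argument forces the posterior of $\Lambda_t$ given $\mathcal{F}_{t-1}$ into $[c, 1-c]$ for an absolute constant $c>0$, so $\Pr(X_{t-1} = 1 - \Lambda_t \mid E_t, X_t \neq X_{t-1}) \geq c$. Combining the $\Omega(\eta)$ per-occurrence contribution from the second subcase with the $\Omega(\tau/\sigma)$ probability of $E_t$, and absorbing the $\tau + \epsilon$ slack through the hypotheses $\tau = o(\eta)$ and $\epsilon = o(\eta \tau / \sigma)$, yields $\E[R_t - \tR_t] \geq \Pr(X_t \neq X_{t-1}) \cdot \Omega(\eta \tau / \sigma)$.
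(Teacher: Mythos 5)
Your overall structure matches the paper's: reduce to the switch event, split on $E_t$ versus $\lnot E_t$, show the $E_t$-contribution is $\Omega(\tau/\sigma)\cdot\Omega(\eta)$ by splitting on the spike orientation, and absorb the $O(\epsilon)$ error using $\eps = o(\eta\tau/\sig)$; your treatment of the $\lnot E_t$ case (averaging over $\Lambda_{t+1}$ to handle the $E_{t+1}$-spike that sits on round $t$, and using the $E_{t-1}$ structure as in \lemref{lem:E2}) is in fact spelled out more carefully than the paper's one-line bound there, and your use of a bounded-likelihood-ratio argument for the orientation $\Lambda_t$ is exactly the paper's \lemref{lma:probRatio}.

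The genuine gap is in your estimate $\Pr(E_t \mid X_t \neq X_{t-1}) = \Omega(\tau/\sigma)$. You assert that the Bernoullis $B_{t+j}$ and the Gaussians $\xi_t,\xi_{t+1},\xi_{t+2}$ are independent of the observations in $\Fcal_{t-1}$, but this is false: the feedback at rounds $t-2$ and $t-1$ contains the spikes $S_{t-2}(X_{t-2})$ and $S_{t-1}(X_{t-1})$, whose presence is determined by $E_{t-1}$ and $E_t$, and these events depend on $W_t, W_{t+1}, W_{t+2}$ --- i.e., precisely on $\rho(t),\rho(t+1),\rho(t+2)$ (hence $B_t,B_{t+1},B_{t+2}$) and on $\xi_t,\xi_{t+1},\xi_{t+2}$. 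Since the switch decision $X_t \neq X_{t-1}$ is a deterministic function of these observations, conditioning on it can in principle bias the posterior of $E_t$ away from its prior (e.g., if the player tends to switch only after observations suggesting no spike, hence suggesting $\lnot E_t$). This is exactly the issue that the paper's \lemref{lma:ProbE} is built to handle: it uses the algorithm modification (observed jumps bounded by $4\sigma\sqrt{\log T}$) together with $\eta,\tau \le \sigma/\log T$ to bound the relevant Gaussian density ratios, first showing $\Pr(E_{t-1}^c \mid \Fcal_{t-1}) \ge 1/2 - o(1)$ and only then reducing to fresh randomness for $\rho(t),\rho(t+1),\xi_t,\xi_{t+1}$. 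You already invoke this kind of approximate-indistinguishability argument for the posterior of $\Lambda_t$, but you need it (and the accompanying hypotheses $\eta,\tau \le \sigma/\log T$, which your stated assumptions $\tau=o(\eta)$, $\eps=o(\eta\tau/\sig)$ do not supply) again here; as written, the claimed exact independence is a step that would fail.
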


The proofs of both lemmas are deferred to \secref{sub:technical} below.
We can now prove our main theorem.

\begin{proof}[Proof of \thmref{thm:main}]
We prove the theorem by distinguishing between two cases, based
on the expected number of switches performed by the player. More
specifically, let $M$ be the number of switches performed by the
player throughout the game.

First, assume that $\E[M] \geq T^{2/3}/\log^2{T}$.
Summing the lower-bounds in \lemref{lma:noSwitch} and \lemref{lma:switch} over all $t$ gives
$$
\E[R] ~\ge~ \sum_{t=1}^{T} \Pr(X_t \neq X_{t-1}) \cdot \Omega(\eta \tau / \sig) 
~=~ \Omega(\eta \tau / \sig) \cdot \E[M] ~~.
$$
Setting $\eta = \log^{-2} T$, $\sig = \log^{-1} T$, $\tau = \log^{-5} T$, $\eps = T^{-1/3}/\log T$ (note that all of the constraints on these values specified in \lemref{lma:switch} are met)
and plugging in our assumption that $\E[M] \geq T^{2/3}/\log^2{T}$ gives the lower bound
$$
\E[R] ~=~ \Omega \left( \frac{T^{2/3}}{\log^6 T} \right) ~~.
$$ 

Next, we assume that $\E[M] < T^{2/3}/\log^2{T}$.
%This analysis is very reminiscent
%of the one in \citet{DekelDiKoPe14}, so we only sketch the proof
%below. One difference between the setting in \citet{DekelDiKoPe14} and
%the one considered here is that the parent function $\rho$ is
%stochastic. 
For any concrete instance of $\rho$, define the conditional
probability measures
$$
\Q^\rho_0(\cdot) ~=~ \Pr(\cdot \mid \chi=0, \rho) \qquad \text{and} \qquad 
\Q^\rho_1(\cdot) ~=~ \Pr(\cdot \mid \chi=1, \rho) ~~.
$$ 
We can apply \lemref{lma:dtv} for any concrete instance of $\rho$ and get
$$
d_\mathrm{TV}^{\Fcal}(\Q^\rho_0, Q^\rho_1) ~\le~ \frac{\eps}{\sigma}  \sqrt{\width(\rho) \, \E[M \mid \rho]} ~~.
$$ 
Taking expectation on both sides of the above, we get
$$
d_\mathrm{TV}^{\Fcal}(\Q_0, Q_1) ~\leq~ \E\big[ d_\mathrm{TV}^{\Fcal}(\Q^\rho_0, Q^\rho_1) \big] ~\leq~
\frac{\eps}{\sigma}  \E\big[ \sqrt{\width(\rho) \, \E[M \mid \rho]} \big] ~\leq~ 
\frac{\eps}{\sigma} \sqrt{\E[\width(\rho)] \, \E[M]}~~,
$$ 
Where the inequality on the left is due to Jensen's inequality, and
the inequality on the right is due to an application of the
Cauchy-Schwartz inequality.  Plugging in $\eps =
T^{-1/3}/\log T$, $\sig = \log^{-1} T$,
$\E[\width(\rho)] = \Theta(\log{T})$ and $\E[M] =
O( T^{2/3}/\log^2 T )$, we conclude that
\begin{equation}\label{eqn:overallTV}
d_\mathrm{TV}^{\Fcal}(\Q_0, Q_1) ~=~ o(1)~~.
\end{equation}

Again, we decompose $\E[R_t] = \E[\tR_t] + \E[R_t - \tR_t]$, but this
time we use the fact that \lemref{lma:switch} implies $\E[R_t - \tR_t]
\geq 0$, and we focus on lower-bounding $\E[\tR_t]$. We decompose
\begin{equation}\label{eqn:AA}
\E[\tR_t] ~=~ \Pr(X_{t=1} = \chi) ~ \E[\tR_t \mid X_{t=1} = \chi] ~+~  \Pr(X_{t=1} \neq \chi) ~ \E[\tR_t \mid X_{t=1} \neq \chi] ~~.  
\end{equation}
The first summand on the right-hand side above trivially equals
zero. \lemref{lma:noSwitch} proves that $\E[\tR_t \mid X_{t=1} \neq \chi]
= \epsilon$. We use
\eqref{eqn:overallTV} to bound
\begin{align*}
\Pr(X_{t-1} \neq \chi) &~=~ \half\, \Pr(X_{t-1} = 0 \mid \chi = 1) + \half\, \Pr(X_{t-1} = 1 \mid \chi = 0) \\
&~\geq~ \half\, \Pr(X_{t-1} = 0 \mid \chi = 1) + \half\, \big( \Pr(X_{t-1} = 1 \mid \chi = 1) - o(1) \big) \\
&~=~ \half - o(1) ~~.
\end{align*}
Plugging everything back into \eqref{eqn:AA} gives $\E[\tR_t] = \Theta(\epsilon)$. We conclude that 
$$
\E[R] ~\geq~ \sum_{t=1}^T \E[\tR_t] ~=~ \Theta(T \epsilon) ~~.
$$
Recalling that $\eps = T^{-1/3}/\log T$ concludes the analysis.
\end{proof}

\subsection{Technical Proofs} \label{sub:technical}

We now provide the proofs of the technical lemmas stated above.
We begin with \lemref{lma:noSwitch}.

\begin{proof}[Proof of \lemref{lma:noSwitch}]
If $X_{t-1} = \chi$ then $\tR_t = 0$ trivially. Assume henceforth that
$X_{t-1} \neq \chi$. If $E_{t-1}$ occurs then \lemref{lem:E2}
guarantees that $\tR_t = Z_t(X_{t-1}) - Z_t(\chi)$, which equals
$\epsilon$ by the definition of $Z_t$. If $\lnot E_{t-1}$ and $\lnot
E_t$ then
$$
\tR_t ~=~ \min\big(Z_{t-1}(X_{t-1}),Z_t(X_{t-1})\big) - \min\big(Z_{t-1}(\chi),Z_t(\chi)\big)~~,
$$ 
which, again, equals $\epsilon$.  If $E_t$ occurs then the loss
depends on whether $W_{t-1} \geq W_t$ and on the value of
$\Lambda_t$. We can first focus on the case where $W_{t-1} \geq
W_t$. If $\Lambda_t \neq X_{t-1}$ then the assumption that $\eta \gg
\tau$ implies that $\min(L_{t-1}(\chi), L_t(\chi)) = Z_{t-1}(\chi)$
and $\min(L_{t-1}(X_{t-1}), L_t(X_{t-1})) = Z_t(X_{t-1})$, and
therefore
\begin{equation}\label{eqn:nos1}
\tR_t ~=~ Z_t(X_{t-1}) - Z_{t-1}(\chi) ~=~ \epsilon - |W_{t-1} - W_t| ~~,
\end{equation}
which could be negative. On the other hand, if $\Lambda_t = X_{t-1}$, then
$\min(L_{t-1}(\chi), L_t(\chi)) = Z_{t}(\chi)$ and $\min(L_{t-1}(X_{t-1}), L_t(X_{t-1}))
= Z_{t-1}(X_{t-1})$, and therefore
\begin{equation}\label{eqn:nos2}
\tR_t ~=~ Z_{t-1}(X_{t-1}) - Z_t(\chi) ~=~ \epsilon + |W_{t-1} - W_t| ~~.
\end{equation}
Now note that $\Lambda_t$ is an unbiased Bernoulli that is independent
of $X_{t-1}$ (this argument would have failed had we directly analyzed
$R_t$ instead of $\tR_t$). Therefore, the possibility of having a
negative regret in \eqref{eqn:nos1} is offset by the equally probable
possibility of a positive regret in \eqref{eqn:nos2}. In other words,
$$
\E\big[\tR_t \,\big|\, X_{t-1} \neq \chi, W_{t-1} \geq W_t, E_t\big] 
~=~ \half\big(\epsilon - |W_{t-1} - W_t|\big) + \half\big(\epsilon + |W_{t-1} - W_t|\big) 
~=~ \epsilon~~.
$$
The same calculation applies when $W_{t-1} < W_t$. Overall, we have shown that $\E[\tR_t \mid X_{t-1} \neq \chi] = \epsilon$.
%and therefore $\E[\tR_t] \geq 0$. 
\end{proof}

Next, we prove \lemref{lma:switch}.

\begin{proof}[Proof of \lemref{lma:switch}]
Since $R_t$ and $\tR_t$ only differ when $X_t \neq X_{t-1}$, we have that
$$
\E[R_t - \tR_t] ~=~ \Pr(X_t \neq X_{t-1}) ~ \E[R_t - \tR_t \mid X_t \neq X_{t-1}]~~,
$$
so it remains to prove that  $\E[R_t - \tR_t \mid X_t \neq X_{t-1}] = \Omega(\eta \tau / \sigma)$.
We deal with two cases, depending on the occurrence of $E_t$, and write
\begin{align}
\E[R_t - \tR_t \mid X_t \neq X_{t-1}] ~=~ &\Pr(\lnot E_t \mid X_t \neq X_{t-1})~ \E[R_t - \tR_t \mid X_t \neq X_{t-1}, \lnot E_t] \nonumber\\
&~+~ \Pr(E_t \mid X_t \neq X_{t-1})~ \E[R_t - \tR_t \mid X_t \neq X_{t-1}, E_t]~~. \label{eqn:decomp1}
\end{align}

We begin by lower-bounding the first case, where $\lnot E_t$.  If
$E_{t-1}$ occurs, then \lemref{lem:E2} guarantees that $R_t -
\tR_t = Z_t(X_{t}) - Z_t(X_{t-1})$, which is at least $- \epsilon$.
Otherwise, if neither $E_{t-1}$ or $E_t$ occur, then again $R_t -
\tR_t \geq - \epsilon$. We upper-bound $\Pr(\lnot E_t \mid X_t \neq X_{t-1})
\leq 1$ and get that
\begin{equation} \label{eqn:negligibleLowerBnd}
\Pr(\lnot E_t \mid X_t \neq X_{t-1})~ \E[R_t - \tR_t \mid X_t \neq X_{t-1}, \lnot E_t] ~\geq~ - \epsilon ~~.
\end{equation}

Next, we lower-bound the second case, where $E_t$. \lemref{lma:ProbE}
below lower-bounds $\Pr(E_t \mid X_t \neq X_{t-1}) = \Omega(\tau /
\sigma)$. \lemref{lma:switchingPenalty} below lower-bounds $\E[R_t -
  \tR_t \mid X_t \neq X_{t-1}, E_t] \geq \eta/3 - \tau$ for $T$
sufficiently large. Recalling the assumption that $\eta \gg \tau$, we conclude that
$$
\Pr(E_t \mid X_t \neq X_{t-1})~ \E[R_t - \tR_t \mid X_t \neq X_{t-1}, E_t] ~=~ \Omega\left( \frac{\eta \tau}{\sigma} \right) ~~.
$$ 
\eqref{eqn:negligibleLowerBnd} can be neglected since
$\eta \tau / \sigma \gg \epsilon$, and this concludes the proof.
\end{proof}

\begin{lemma}\label{lma:ProbE} 
Suppose $\eta, \tau \leq \sigma/\log T$. For all $t > 1$ it holds that
$\Pr(E_t \mid X_t \neq X_{t-1}) ~=~ \Omega(\tau / \sig)$.
\end{lemma}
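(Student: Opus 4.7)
The plan is to reduce $E_t$ to a condition on three independent Gaussian increments of the walk $W$, and then to exploit the independent orientation Bernoulli $\Lambda_t$ to decouple the switching event from $E_t$ via a Bayes computation.

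First, I condition on the event $G = \{\rho(s) = s-1 \text{ for every } s \in \{t-2, t-1, t, t+1, t+2\}\}$, which by the construction of $\rho$ in \lemref{lma:rho} is determined by five independent unbiased Bernoullis, and hence satisfies $\Pr(G) \ge 2^{-5}$; moreover $G$ is independent of $(\chi, \xi_{1:T}, \Lambda_{1:T})$. Under $G$, the walk increments $W_s - W_{s-1}$ equal $\xi_s$ for each $s$ in the window, so $E_t$ reduces to $\{|\xi_t| \le \tau\} \cap \{\xi_{t+1} < -\tau\} \cap \{\xi_{t+2} < -\tau\}$, and Gaussian density bounds together with the hypothesis $\tau \le \sigma/\log T$ give $\Pr(E_t \mid G) = \Omega(\tau/\sigma)$. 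Since $E_t$ forces $|\xi_t| \le \tau$ while both $E_{t-1}$ and $E_{t-2}$ force $\xi_t < -\tau$, we have $E_t \subseteq F := \lnot E_{t-1} \cap \lnot E_{t-2}$, with $\Pr(F \mid G) = 1 - O(\tau/\sigma)$.

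Second, on $G \cap F$ the triple $(\xi_t, \xi_{t+1}, \xi_{t+2})$ enters the player's observations $L_1(X_1), \ldots, L_{t-1}(X_{t-1})$ solely through the single indicator $\mathbb{I}\{E_t \wedge X_{t-1} \neq \Lambda_t\}$ appearing in $S_{t-1}(X_{t-1})$: the $E_{t-1}$ and $E_{t-2}$ branches of the earlier spikes vanish on $F$, and under $G$ the walk values $W_s$ for $s \le t-1$ depend only on $\xi_{1:t-1}$. Since $\Lambda_t$ is independent of the history through round $t-2$ (it appears in no earlier spike), $\Pr(X_{t-1} = \Lambda_t \mid \mathrm{hist}) = 1/2$, and on the event $\{X_{t-1} = \Lambda_t\}$ the spike vanishes, so $L_{t-1}(X_{t-1}) = Z_{t-1}(X_{t-1})$ coincides with the value the player would see on $\lnot E_t \cap F$. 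Let $p_L, p_H \in \{0, 1\}$ denote the value of the switching indicator $\mathbb{I}\{X_t \neq X_{t-1}\}$ on the low-observation branch ($X_{t-1} = \Lambda_t$) and the high-observation branch ($X_{t-1} \neq \Lambda_t$ together with $E_t$) respectively, and set $a := \Pr(E_t \mid G, F) = \Omega(\tau/\sigma)$. A direct Bayes computation gives, for every history with $\Pr(X_t \neq X_{t-1} \mid G, F, \mathrm{hist}) > 0$,
\[
\Pr(E_t \mid X_t \neq X_{t-1}, G, F, \mathrm{hist}) \;=\; \frac{a\,(p_H+p_L)/2}{a\,(p_H+p_L)/2 + (1-a)\,p_L} \;\ge\; \frac{a}{2} \;=\; \Omega(\tau/\sigma),
\]
and averaging over history together with $E_t \subseteq F$ yields $\Pr(E_t \mid X_t \neq X_{t-1}, G) = \Omega(\tau/\sigma)$.

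Finally, using $\Pr(E_t \mid X_t \neq X_{t-1}) \ge \Pr(G \mid X_t \neq X_{t-1}) \cdot \Pr(E_t \mid X_t \neq X_{t-1}, G)$, the lemma reduces to showing $\Pr(G \mid X_t \neq X_{t-1}) = \Omega(1)$. This is the main technical obstacle: although $G$ is determined by five Bernoullis that are independent of the game randomness, they influence $\rho$ globally through the ordered renaming in the construction in \lemref{lma:rho}, and so can in principle shift the player's switching behavior. The plan is to use a coupling argument leveraging the $O(\log T)$ bounds on depth and width of $\rho$ from \lemref{lma:rho} to show that $\Pr(X_t \neq X_{t-1} \mid \rho)$ cannot vary by more than a constant factor between $\rho \in G$ and a typical $\rho$, yielding $\Pr(G \mid X_t \neq X_{t-1}) \ge \Omega(\Pr(G)) = \Omega(1)$, in the same spirit as the total-variation step invoked later in the proof of \thmref{thm:main}.
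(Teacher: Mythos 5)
Your conditional argument (the Bayes computation with the spike-orientation coin $\Lambda_t$, giving $\Pr(E_t \mid X_t \neq X_{t-1}, G, F, \mathrm{hist}) \geq a/2$) is a reasonable core idea, and with the history refined to include all randomness other than the branch indicator it can be made to work \emph{given} $G$. But the proof is not complete, and the missing piece is not a technicality: everything hinges on $\Pr(G \mid X_t \neq X_{t-1}) = \Omega(1)$, which you explicitly leave as a ``plan.'' This is a genuine gap. The event $\{X_t \neq X_{t-1}\}$ is a deterministic function of the player's observations, and those observations are built from $W_{1:t-1}$, hence from the realized parent function $\rho$; your event $G$ constrains $\rho$ at rounds $t-2$ and $t-1$, which are already in the player's past and therefore entangled with its switching decision. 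A priori, $\Pr(X_t \neq X_{t-1} \mid \rho)$ at a \emph{fixed} round $t$ can be tiny and concentrated on parent configurations outside $G$, and nothing you cite rules this out. The tool you propose to borrow, the total-variation machinery behind \lemref{lma:dtv} and the step in the proof of \thmref{thm:main}, controls the dependence of the player's behavior on $\chi$, not on $\rho$; no bound of that kind on the dependence on $\rho$ appears in the paper, and the depth/width bounds of \lemref{lma:rho} do not by themselves yield such a coupling.

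The paper's proof is structured precisely to avoid this reverse-conditioning problem. It proves the stronger pointwise statement $\Pr(E_t \mid \Fcal_{t-1}) = \Omega(\tau/\sigma)$; since the player is deterministic, $\{X_t \neq X_{t-1}\}$ is $\Fcal_{t-1}$-measurable and the lemma follows immediately, with no need to compare switching probabilities across realizations of $\rho$. Inside that argument, the only structural event about $\rho$ that is used is $\{\rho(t)=t-1,\ \rho(t+1)=t\}$, i.e.\ constraints at rounds $\geq t$ only, which by the conditional property $\Pr(\rho(t)=t-1 \mid \rho(1),\ldots,\rho(t-1)) \geq \tfrac12$ of \lemref{lma:rho} remain fresh randomness of conditional probability $\geq \tfrac14$ given the past; the residual leakage of $E_{t-1}$ (and of the round-$(t-1)$ spike) into $\Fcal_{t-1}$ is then handled by the Gaussian density-ratio bound using $\eta,\tau \leq \sigma/\log T$ together with the ``no switch after a large observed jump'' modification of the algorithm. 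If you want to salvage your route, you would either have to drop the constraints on $\rho(t-2),\rho(t-1)$ from $G$ and handle general parents at those rounds (at which point $E_t$ no longer reduces to a clean statement about $\xi_t,\xi_{t+1},\xi_{t+2}$ and you need something like the paper's density-ratio step anyway), or actually prove the coupling estimate $\Pr(G \mid X_t \neq X_{t-1}) = \Omega(1)$, which is the hard part and is currently absent.
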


\begin{proof}
By our earlier modification of the algorithm,
we assume that 
\begin{equation}\label{eq-assumption-lemma6}
	|L_{s}(X_{s})- L_{\rho(s)}(X_{\rho(s)}) | 
	~\le~ 4\sigma\sqrt{\log{T}} \mbox{ for } s\in \{t-2, t-1\} 
\end{equation}
(which occurs with probability at least $1 - O(T^{-4})$). Otherwise, the event $X_t \neq X_{t-1}$ would never occur due to our modification of the algorithm and the statement is irrelevant. 

In order to prove the lemma, we verify a stronger statement that  $\Pr(E_t \mid \Fcal_{t-1}) =
\Omega(\tau / \sigma)$, where $\Fcal_{t-1}$ is the $\sigma$-field
generated by the player's observations up to round $t-1$ (note that
$X_t$ is $\Fcal_{t-1}$-measurable). Let $f_1(\ell_1, \ldots, \ell_{t-1})$ be the conditional density of $(L_1(X_1), \ldots, L_{t-1}(X_{t-1}))$ given $E_{t-1}$, and let $f_2(\ell_1, \ldots, \ell_{t-1})$ be the conditional density of $(L_1(X_1), \ldots, L_{t-1}(X_{t-1}))$ given $E_{t-1}^c$. We get that
$$
	\min \frac{f_2(\ell_1, \ldots, \ell_{t-1}) \Pr(E_{t-1}^c)}{f_1(\ell_1, \ldots, \ell_{t-1}) \Pr(E_{t-1})} 
	~\geq~ \min_{x\leq 4 \sigma \sqrt{\log T}} \Big(\frac{\mathrm{e}^{x^2/\sigma^2}}{\mathrm{e}^{(x+\eta)^2/\sigma^2}}\Big)^2 
	~=~ 1-o(1)\,,
$$
where the the first minimum is over all sequences that are compatible with \eqref{eq-assumption-lemma6}, and the last inequality follows from the assumption that $\eta \leq \sigma/\log T$.
Hence,  we have 
\begin{equation} \label{eq-lemma-6-1}
	\Pr(E_{t-1}^c \mid \mathcal F_{t-1}) \geq 1/2 + o(1)\,.
\end{equation} Further, we see that
\begin{align} \label{eq-lemma-6-2}
	&\frac{\Pr(E_{t-1}^c, \rho(t) = t-1, \rho(t+1) = t, \xi_t \geq -\tau\mid \mathcal F_{t-1})}{\Pr(E_{t-1}^c \mid \mathcal F_{t-1})} \nonumber \\
	&~\geq~ \Pr( \rho(t) = t-1, \rho(t+1) = t \mid \mathcal \rho(1), \ldots, \rho(t-1)) \cdot \Pr(\xi_t \geq-\tau) \nonumber \\
	&~\geq~ \frac{1-o(1)}{8} ~.
\end{align}
Conditioning on the event $E_{t-1}^c \cap\{ \rho(t) = t-1, \rho(t+1) = t, \xi_t \geq -\tau\}$, we note that $E_t$ is independent of $\mathcal F_{t-1}$.
Combined with \eqref{eq-lemma-6-1} and \eqref{eq-lemma-6-2}, it follows that
\begin{align*}
	\Pr(E_t \mid \mathcal F_{t-1}) 
	&~\geq~ \frac{1-o(1)}{8} \cdot \Pr(E_{t} \mid E_{t-1}^c, \rho(t) = t-1, \rho(t+1) = t, \xi_t \geq -\tau) \\
	&~\geq~ \frac{1-o(1)}{8} \cdot \Pr(|\xi_t| \leq \tau, \xi_{t+1} < -\tau \mid \xi_t \geq -\tau) \\
	&~=~ \Omega(\tau/\sigma) ~,
\end{align*}
where the last inequality follows from the assumption that $\tau<\sigma/\log T$.
\end{proof}

\begin{lemma}\label{lma:switchingPenalty} 
For all $t > 1$ it holds that
$\E[R_t - \tR_t \mid X_t \neq X_{t-1}, E_t] ~\geq~ \eta/3 - \tau$.
\end{lemma}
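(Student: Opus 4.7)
My plan is to split the event $\{X_t \neq X_{t-1}\} \cap E_t$ according to whether $\Lambda_t = X_{t-1}$ or $\Lambda_t = 1-X_{t-1}$, derive sharp pointwise bounds on $R_t-\tR_t$ in the two sub-cases, and then use a Gaussian density-ratio argument, made possible by the algorithm modification that effectively truncates $\xi_{t-1}$ to a window of radius $4\sigma\sqrt{\log T}$, to show that each sub-case has conditional probability essentially $\half$. The positive contribution from the ``bad'' sub-case then dominates.

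I first observe that $E_t$ is incompatible with both $E_{t-1}$ and $E_{t+1}$: $E_t$ requires $|W_{t-1}-W_t|\le \tau$ while $E_{t-1}$ requires $W_t < W_{t-1}-\tau$, and symmetrically for $E_{t+1}$. Therefore on $E_t$ the only spikes in rounds $t-1$ and $t$ are those added because of $E_t$ itself, i.e.\ $L_{t-1}(x) = Z_{t-1}(x) + \eta\,\ind{x \neq \Lambda_t}$ and $L_t(x) = Z_t(x) + \eta\,\ind{x=\Lambda_t}$. Splitting on $\Lambda_t$: in \emph{Case A} ($\Lambda_t = X_{t-1}$) the round-$t$ spike lies on the stale action, and using $\eta \gg \tau, \eps$ to collapse the minima one finds $R_t - \tR_t = \min(Z_{t-1}(\Lambda_t), Z_t(1-\Lambda_t)) - Z_{t-1}(\Lambda_t) \ge -(\tau+\eps)$. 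In \emph{Case B} ($\Lambda_t = 1-X_{t-1}$) the switch steers the player onto the spiked action on both rounds, and an analogous simplification yields $R_t - \tR_t = \eta + \min(Z_{t-1}(1-\Lambda_t), Z_t(\Lambda_t)) - Z_t(1-\Lambda_t) \ge \eta - \tau$.

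The hard part is bounding $\Pr(\text{Case B} \mid \{X_t\neq X_{t-1}\}\cap E_t)$ from below. Since $\Lambda_t$ is an independent unbiased Bernoulli, and in particular independent of $X_{t-1}$ and of $E_t$ (the latter depending only on $\xi_t,\xi_{t+1},\xi_{t+2}$ on the constant-probability event $\rho(t)=t-1$, $\rho(t+1)=t$, $\rho(t+2)=t+1$), the only way conditioning on the switch biases $\Lambda_t$ is through the dependence of $X_t$ on the last observation $L_{t-1}(X_{t-1})$, which equals $v_0 := Z_{t-1}(X_{t-1})$ in Case~A and $v_0+\eta$ in Case~B. Writing $s$ for the player's deterministic switching rule as a function of the last observation (with prior observations frozen), the probabilities of switching in the two cases are $S_A := \E[\ind{E_t}\,s(v_0)]$ and $S_B := \E[\ind{E_t}\,s(v_0+\eta)]$. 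Since $v_0 = W_{\rho(t-1)}+c+\xi_{t-1}$ with $\xi_{t-1}\sim\Ncal(0,\sigma^2)$ independent of the rest of the conditioning, a change of variables expresses $S_B/S_A$ through the Gaussian likelihood ratio $\exp((2\xi_{t-1}\eta - \eta^2)/(2\sigma^2))$ evaluated on the support of the switch. Invoking the algorithm modification, $|\xi_{t-1}|\le 4\sigma\sqrt{\log T}$ whenever a switch on round $t$ is still possible, and combined with $\eta \le \sigma/\log T$ from the parameter choice the exponent is uniformly $O(\eta\sqrt{\log T}/\sigma) = o(1)$. Hence $S_B \ge (1-o(1))\,S_A$ (and symmetrically), so $\Pr(\text{Case B} \mid \{\text{switch}\}\cap E_t) = S_B/(S_A+S_B) \ge \half - o(1)$.

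Combining the two ingredients, $\E[R_t-\tR_t \mid \{X_t\neq X_{t-1}\}\cap E_t] \ge (\half-o(1))(\eta-\tau) - (\half+o(1))(\tau+\eps) = \eta/2 - \tau - o(\eta) - O(\eps)$, which exceeds $\eta/3 - \tau$ for $T$ large enough since $\eps = o(\eta)$ and the $o(\eta)$ slack is absorbed into the $\eta/6$ gap. The main technical obstacle is making the density-ratio step rigorous despite the fact that conditioning on $E_t$ and on the switch couples $\xi_{t-1}$ to other variables through the history; I would handle this by conditioning on $\Fcal_{t-2}$ together with $(\xi_t,\xi_{t+1},\xi_{t+2})$, under which $E_t$ is determined, $s$ is a fixed deterministic function of the last observation, and $\xi_{t-1}$ remains an independent $\Ncal(0,\sigma^2)$ restricted by the modification, so that the uniform likelihood-ratio bound applies pointwise in this conditioning and then integrates out cleanly.
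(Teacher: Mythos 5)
Your proposal is correct and follows essentially the same route as the paper: the same split on whether the spike orientation $\Lambda_t$ matches the switch direction, the same pointwise bounds of order $\eta-\tau$ and $-\tau$ on $R_t-\tR_t$ in the two sub-cases, and the same Bayes/Gaussian likelihood-ratio argument (using the truncation from the algorithm modification and $\eta\le\sigma/\log T$) to show the orientation is nearly unbiased given the switch, which is exactly the content of Lemma~\ref{lma:probRatio}. The only difference is cosmetic: you carry the sharper probability $\tfrac12-o(1)$ through to a bound of roughly $\eta/2-\tau$, whereas the paper rounds the ratio down to $\tfrac12$ and states the weaker $\eta/3-\tau$.
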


\begin{proof}
We rewrite $\E[R_t - \tR_t \mid X_t \neq X_{t-1}, E_t]$ as
\begin{align*}
	& \Pr(\Lambda_t = X_t \mid X_t \neq X_{t-1}, E_t)~ \E[R_t - \tR_t \mid X_t \neq X_{t-1}, E_t, \Lambda_t = X_t] \\
	& +~\Pr(\Lambda_t \neq X_t \mid X_t \neq X_{t-1}, E_t)~ \E[R_t - \tR_t \mid X_t \neq X_{t-1}, E_t, \Lambda_t \neq X_t] ~~.
\end{align*}
First consider the case where $\Lambda_t = X_t$, namely, the
orientation of the spikes coincides with the direction of the player's
switch. In this case,
$$
\E[R_t - \tR_t \mid X_t \neq X_{t-1}, E_t, \Lambda_t = X_t] ~\geq~ \eta - \tau ~~.
$$
If $\Lambda_t = X_t$ then the orientation of the spikes does not coincide with the switch direction and 
$$
\E[R_t - \tR_t \mid  X_t \neq X_{t-1}, E_t, \Lambda_t \neq X_t] ~\geq~ - \tau ~~.
$$
\lemref{lma:probRatio} below implies that 
$$
\Pr(\Lambda_t = X_t \mid X_t \neq X_{t-1}, E_t) ~\geq~ \frac{1}{3} ~~,
$$
which concludes the proof.
\end{proof}

\begin{lemma}\label{lma:probRatio}
Suppose that $\eta \leq \sigma/\log T$. For a sufficiently large $T$ it holds that 
$$
\frac{\Pr(\Lambda_t = X_t \mid X_t \neq X_{t-1}, E_t)}{\Pr(\Lambda_t \neq X_t \mid X_t \neq X_{t-1}, E_t)} ~\ge~ \half ~~.
$$
\end{lemma}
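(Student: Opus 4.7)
The plan is to reduce the claim to a Gaussian likelihood-ratio comparison, using the fact that $\eta \leq \sigma/\log T$ makes the spike on $L_{t-1}(X_{t-1})$ small relative to the noise scale $\sigma$. First, observe that on the switching event $\{X_t \neq X_{t-1}\}$, the event $\{\Lambda_t = X_t\}$ is equivalent to $\{\Lambda_t \neq X_{t-1}\}$, so the lemma is equivalent to $p \geq \half q$, where $p = \Pr(\Lambda_t \neq X_{t-1}, X_t \neq X_{t-1}, E_t)$ and $q = \Pr(\Lambda_t = X_{t-1}, X_t \neq X_{t-1}, E_t)$.

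Next, I would simplify $L_{t-1}(X_{t-1})$ on $E_t$. Because $E_{t-1}$ cannot co-occur with $E_t$ (the two impose incompatible constraints on $W_{t-1} - W_t$), the only spike on round $t-1$ comes from $E_t$, giving $L_{t-1}(X_{t-1}) = c + \xi_{t-1} + \eta \ind{\Lambda_t \neq X_{t-1}}$, with $c = W_{\rho(t-1)} + \half - \epsilon \ind{X_{t-1} = \chi}$. I would then condition on $\mathcal{H}$, the $\sigma$-field generated by all randomness except $\xi_{t-1}$ and $\Lambda_t$; the player's switching rule corresponds to an $\mathcal{H}$-measurable set $A$ via $\{X_t \neq X_{t-1}\} = \{L_{t-1}(X_{t-1}) \in A\}$. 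The algorithmic modification that freezes the player whenever $|L_s(X_s) - L_{\rho(s)}(X_{\rho(s)})| \geq 4\sigma\sqrt{\log T}$ forces $A \subseteq [c' - 4\sigma\sqrt{\log T}, c' + 4\sigma\sqrt{\log T}]$ with $c' = L_{\rho(t-1)}(X_{\rho(t-1)})$, and a short computation gives $|c - c'| \leq \epsilon + \eta$, so $A - c \subseteq [-5\sigma\sqrt{\log T}, 5\sigma\sqrt{\log T}]$ for $T$ large.

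The core of the argument is a Gaussian likelihood ratio. Restricting to the sub-event $\{\rho(t) = t-1, \rho(t+1) = t, \rho(t+2) = t+1\}$ (which has probability at least $1/8$ by \lemref{lma:rho}), $E_t$ reduces to $\{|\xi_t| \leq \tau, \xi_{t+1} < -\tau, \xi_{t+2} < -\tau\}$ and so does not constrain $\xi_{t-1}$; conditionally on $\mathcal{H}$ and $E_t$, $\xi_{t-1}$ remains $N(0, \sigma^2)$-distributed. Then $p = \half \Pr(\xi_{t-1} \in A - c - \eta \mid \mathcal{H})$ and $q = \half \Pr(\xi_{t-1} \in A - c \mid \mathcal{H})$, and the likelihood ratio $g(\xi - \eta)/g(\xi) = \exp((2\eta\xi - \eta^2)/(2\sigma^2))$ lies in $[1 - o(1), 1 + o(1)]$ for $|\xi| \leq 5\sigma\sqrt{\log T}$, since $\eta\sqrt{\log T}/\sigma \leq 1/\sqrt{\log T} = o(1)$. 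Integrating this pointwise bound yields $p/q \geq 1 - o(1) \geq \half$ for $T$ large on the clean sub-event.

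The main obstacle I anticipate is handling $\rho$ configurations where $E_t$ does constrain $\xi_{t-1}$; most critically, when $\rho(t) \neq t-1$ the condition $|W_{t-1} - W_t| \leq \tau$ forces $\xi_{t-1}$ into an interval of length only $2\tau \ll \eta$, so a shift by $\eta$ can push $A - c$ entirely out of the window and make the pointwise ratio degenerate. I would control this by case analysis on $\rho$: on the clean sub-event the argument above gives the bound pointwise; for other $\rho$, the local near-constancy of the Gaussian density on any $O(\tau)$-length window, combined with averaging over the random position of the window $J = [W_t - W_{\rho(t-1)} - \tau, W_t - W_{\rho(t-1)} + \tau]$, should suffice to preserve the bound.
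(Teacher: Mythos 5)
Your core argument is the same as the paper's: rewrite the odds (via Bayes' rule / the independence of $\Lambda_t$) as a comparison of the switch probability under the two spike orientations, use the algorithmic modification to confine the relevant observation to a window of width $O(\sigma\sqrt{\log T})$, and bound the Gaussian likelihood ratio under a mean shift of $\eta$ by $1-o(1)$ on that window using $\eta \le \sigma/\log T$. The paper does exactly this, comparing the conditional densities $f_1,f_2$ of $L_{t-1}(X_{t-1})-L_{\rho(t-1)}(X_{\rho(t-1)})$ under the two orientations (treated as $N(\eta,\sigma^2)$ versus $N(0,\sigma^2)$) and concluding that the decision to switch at time $t$ is statistically close in the two scenarios.

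The place where you diverge is the case you yourself flag as the ``main obstacle,'' and there your write-up has a genuine gap. Restricting to the sub-event $\{\rho(t)=t-1,\;\rho(t+1)=t,\;\rho(t+2)=t+1\}$ does not by itself bound the ratio $p/q$: both $p$ and $q$ receive contributions from the complementary $\rho$-configurations, and nothing you prove prevents those configurations from dominating $q$ while contributing little to $p$ --- which is precisely the degenerate scenario you describe, where conditioning on $|W_{t-1}-W_t|\le\tau$ pins $\xi_{t-1}$ to a window of length $2\tau$ and the shift by $\eta\gg\tau$ can move the reachable observations entirely off the decision set. Your proposed fix (averaging over the window's position) is only asserted (``should suffice''), and as stated it conflicts with your conditioning on $\mathcal{H}$, which fixes $W_t$ and hence the window's location; to execute it you would need to leave $\xi_t$ (at least) unconditioned, and also account for the fact that earlier observations can themselves depend on $\xi_{t-1}$ (e.g.\ spikes triggered by $E_{t-3}$, whose definition involves $W_{t-1}$), so the switching rule is not in general a fixed $\mathcal{H}$-measurable set applied to the last observation. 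To be fair, the paper's own proof silently assumes this complication away by taking $f_1,f_2$ to be exact Gaussian densities, so on the main mechanism your sketch matches the paper and is in fact more explicit about the weak point; but the case you isolate as critical is left unproven rather than resolved.
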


\begin{proof}
The ratio on the left can be rewritten, using Bayes' rule, as
$$
\frac{\Pr( X_t \neq X_{t-1} \mid \Lambda_t = X_t,   E_t)}{\Pr( X_t \neq X_{t-1} \mid \Lambda_t \ne X_t,   E_t)}   ~~.
$$
To see this is at least $\half$, condition on the history until time $t-2$ and note that by our earlier modification of the algorithm,
we may assume that $|L_{t-1}(X_{t-1})- L_{\rho(t-1)}(X_{\rho(t-1)}) | \le 4\sigma\sqrt{\log{T}}$. We let $f_1(x)$ be the conditional density of $L_{t-1}(X_{t-1})- L_{\rho(t-1)}(X_{\rho(t-1)})$ given $\{X_t =\Lambda_t\} \cap E_t$, and let $f_2(x)$ be the conditional density of $L_{t-1}(X_{t-1})- L_{\rho(t-1)}(X_{\rho(t-1)})$ given $\{\Lambda_t \neq X_t\} \cap E_t$. Therefore, we see that $f_1$ is the density function for $\sigma Z + \eta$, and $f_2$ is the density function for $\sigma Z$ where $Z \sim N(0, 1)$. Thus, we have
\begin{equation} \label{eq-two-scenario}
	\min_{|x| \leq 4 \sigma \sqrt{\log T}} \frac{f_1(x)}{f_2(x)} 
	~=~ \min_{|x| \leq 4 \sigma \sqrt{\log T}} \frac{\mathrm{e}^{-(x-\eta)^2/2\sigma^2}}{\mathrm{e}^{-x^2/2\sigma^2}} 
	~=~ 1-o(1) ~,
\end{equation}
where the last inequality follows from the assumption that $\eta \leq \sigma/\log T$. Now consider  two scenarios of the game where the observations are identical up to time $t-2$, and then for the two scenarios we condition on events $\{\Lambda_t = X_t\} \cap E_t$ and $\{\Lambda_t \neq X_t\} \cap E_t$ respectively. Then by \eqref{eq-two-scenario} the observation at time $t-1$ is statistically close, and therefore the algorithm will make a decision for $X_t$ that is statistically close in these two scenarios. Formally, we get that
$$
	\frac{\Pr( X_t \neq X_{t-1} \mid \Lambda_t = X_t,   E_t)}{\Pr( X_t \neq X_{t-1} \mid \Lambda_t \ne X_t,   E_t)}   
	~\geq~ \min_{|x| \leq 4 \sigma \sqrt{\log T}} \frac{f_1(x)}{f_2(x)} 
	~=~ 1-o(1) ~,
$$
completing the proof of the lemma.
\end{proof}

\subsection{The Max Adversary} \label{sec:maxAdversary}

In the previous section, we proved that the minimax regret, with
bandit feedback, against the min adversary is $\t{\Omega}(T^{2/3})$. 
The same can be proved for the max adversary, using
an almost identical proof technique, namely, by using the max
combining function to simulate a switching cost. The construction of
the loss process $Z_{1:T}$ remains as defined above. The event $E_t$
changes, and requires $|W_{t-1}-W_t| \leq \tau$ and $W_{t+1} > W_t +
\eta$. The spikes also change: we set 
$$
S_{t-1}(\Lambda_t) = 1, \quad S_{t}(\Lambda_t) = 1, \quad S_{t-1}(1-\Lambda_t) = 0, \quad S_{t}(1-\Lambda_t) = 0~~.
$$
%The formal proof is omitted due to space constraints.
The formal proof is omitted.

\section{Linear Composite Functions are Easy} \label{sec:linear}

In this section, we consider composite functions that are linear in
the oblivious function $\ell_{t-m:t}$.  Namely, the adversary 
chooses a memory size $m \geq 1$ and defines
\begin{align} \label{eq:linear-func}
\forall\,t\quad f_t(x_{1:t}) ~=~ a_{m} \ell_{t-m}(x_{t-m}) + \cdots + a_{0} \ell_t(x_t) ~~,
\end{align}
where $a_{0},a_{1},\ldots,a_{m}$ are fixed, bounded, and known
coefficients, at least one of which is non-zero (otherwise the regret
is trivially zero). In order to ensure that $f_{t}(x_{1:t}) \in [0,1]$
for all $t$, we assume that $\sum_{i=0}^{m} a_{i} \le 1$. We can also
assume, without loss of generality, that in fact $\sum_{i=0}^{m} a_{i}
= 1$, since scaling all of the loss functions by a constant scales the
regret by the same constant. Recall that, for completeness, we assumed
that $\ell_{t} \equiv 0$ for $t \le 0$.

\begin{algorithm}
\caption{\sc Strategy for Linear Composite Functions} \label{alg:linear}
\begin{algorithmic}
\STATE set $d = \min\{ i \ge 0 \,:\, a_{i} \ne 0 \}$
\STATE initialize $d+1$ independent instances $\mathcal{A}_{0},\ldots,\mathcal{A}_{d}$ of \textsc{Exp3}.
\STATE initialize $z_{0} = z_{-1} = \ldots = z_{-m+1} = 0$
\FOR{$t = 1$ to $T$}
	\STATE set $j = t \mod (d+1)$
	\STATE draw $x_{t} \sim \mathcal{A}_{j}$
	\STATE play $x_{t}$ and observe feedback $f_{t}(x_{1:t})$
	\STATE set
	$
		z_{t} ~\gets~ 
			\frac{1}{a_{d}} \left( f_{t}(x_{1:t}) - \sum_{i=d+1}^{m} a_{i} z_{t-i} \right)
	$
	\STATE feed $\mathcal{A}_{j}$ with feedback $z_{t}$ (for action $x_{t}$)
\ENDFOR
\end{algorithmic}
\end{algorithm}

We show that an adversary that chooses a linear composite loss induces
an \emph{easy} bandit learning problem. More specifically, we present
a strategy, given in Algorithm~\ref{alg:linear}, that achieves
$\t{O}(\sqrt{T})$ regret against any loss function sequence of
this type. This strategy uses the \textsc{Exp3} algorithm
\citep{Auer:02} as a black box, and relies on the guarantee that
\textsc{Exp3} attains a regret of $O(\sqrt{Tk \log T})$ against
any oblivious loss sequence, with bandit feedback. 
\begin{theorem} \label{thm:linear}
For any sequence of loss functions $f_{1:T}$ of the form in \eqref{eq:linear-func}, the expected regret of Algorithm~\ref{alg:linear} satisfies
$R = O(\sqrt{m T k\log k})$.
\end{theorem}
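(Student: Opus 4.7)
The plan is to reduce the composite loss game to an oblivious $k$-armed bandit with feedback delay $d$, and then invoke the standard round-robin trick of running $d+1$ parallel \textsc{Exp3} copies to handle the delay.

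First, I would verify by induction on $t$ that $z_t = \ell_{t-d}(X_{t-d})$, using the conventions $\ell_s \equiv 0$ and $z_s = 0$ for $s \le 0$. The inductive step substitutes the expansion $f_t(X_{1:t}) = \sum_{i=d}^m a_i \ell_{t-i}(X_{t-i})$ into the recurrence defining $z_t$ and applies the inductive hypothesis to each older $z_{t-i}$, telescoping everything except the $a_d\,\ell_{t-d}(X_{t-d})$ term. This identifies the algorithm's feedback at round $t$ as the exact oblivious bandit loss $\ell_{t-d}(X_{t-d})$ from $d$ rounds earlier.

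Next, I would decompose the composite regret into an oblivious bandit regret plus a small boundary term. Swapping the order of summation in $\sum_{t=1}^T f_t(X_{1:t}) = \sum_{t=1}^T \sum_{i=d}^m a_i \ell_{t-i}(X_{t-i})$ and using $\sum_{i=d}^m a_i = \sum_{i=0}^m a_i = 1$, every $\ell_s(X_s)$ with $d \le s \le T-m$ picks up total weight exactly $1$, while only $O(m)$ boundary indices receive weights in $[0,1]$. The same decomposition applies to the comparator $\min_{x \in \Xcal}\sum_t f_t(x,\ldots,x)$, so since $\ell_s \in [0,1]$,
\begin{equation*}
R ~\le~ \sum_{s=1}^{T} \ell_s(X_s) ~-~ \min_{x \in \Xcal} \sum_{s=1}^{T} \ell_s(x) ~+~ O(m) .
\end{equation*}

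Then I would bound the oblivious regret via the round-robin. The subsets $T_j = \{t \in [T]: t \bmod (d+1) = j\}$ for $j = 0, \ldots, d$ partition $[T]$ into $d+1$ blocks of size at most $\lceil T/(d+1)\rceil$. Because the feedback delay $d$ is exactly one less than the cycle length $d+1$, the bandit feedback for any action $\Acal_j$ plays becomes available by the time of its very next turn, so $\Acal_j$ effectively runs \textsc{Exp3} on a standard oblivious $k$-armed bandit of length $|T_j|$, attaining expected regret $O(\sqrt{|T_j|\, k \log k})$ against any fixed arm. Sub-additivity of $\min_x$ across the $d+1$ blocks together with Cauchy--Schwarz in the form $\sum_j \sqrt{|T_j|} \le \sqrt{(d+1)\,T}$ then yield total oblivious regret $O(\sqrt{(d+1) T k \log k}) = O(\sqrt{m T k \log k})$, which absorbs the $O(m)$ boundary term.

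The hard part will be the per-instance \textsc{Exp3} reduction in the last step: one must verify that, conditional on the coin flips of the other $d$ instances, the scalar feedback $\Acal_j$ consumes on each of its turns is indeed the oblivious loss at its own previously chosen action, with no adaptivity introduced by the interleaved plays of the other instances. The synchronization between the delay $d$ and the cycle length $d+1$ is precisely what makes this reduction valid.
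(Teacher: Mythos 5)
Your proposal is correct and follows essentially the same route as the paper's proof: the inductive identity $z_t=\ell_{t-d}(x_{t-d})$, the reduction to $d+1$ interleaved \textsc{Exp3} instances each facing an oblivious subsampled sequence of length $\Theta(T/d)$, and the decomposition of the composite regret into the oblivious bandit regret plus an $O(m)$ boundary term arising from $\sum_i a_i=1$. The only differences are cosmetic (Cauchy--Schwarz over block sizes versus summing $O(\sqrt{Tk\log k/d})$ directly, and your explicit remark on the delay/cycle synchronization, which the paper leaves implicit).
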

\begin{proof}
First, observe that $z_{t} = \ell_{t-d}(x_{t-d})$ for all $t \in [T]$.
Indeed, for $t = 1$ this follows directly from the definition of $f_{1:m}$ (and from the fact that $z_{t} = 0$ for $t \le 0$), and for $t > 1$ an inductive argument shows that
\begin{align*}
	z_{t} 
	~=~ \frac{1}{a_{d}} \left( f_{t}(x_{1:t}) - \sum_{i=d+1}^{m} a_{i} z_{t-i} \right)
	~=~ \frac{1}{a_{d}} \left( f_{t}(x_{1:t}) - \sum_{i=d+1}^{m} a_{i} \ell_{t-i}(x_{t-i}) \right)
	~=~ \ell_{t-d}(x_{t-d}) ~.
\end{align*}
Hence, each algorithm $\mathcal{A}_{j}$ actually plays a standard bandit game with the subsampled sequence of oblivious loss functions $\ell_{j}, \ell_{j+(d+1)}, \ell_{j+2(d+1)}, \ldots$ .
Consequently, for each $j = 0,1,\dots,d$ we have
\begin{align} \label{eq:exp3j}
	\forall ~ x \in [k] ~, \quad
	\E \left[ \sum_{t \in S_{j}} \ell_{t}(x_{t}) \right] - \sum_{t \in S_{j}} \ell_{t}(x)
	~=~ O\left( \sqrt{\frac{Tk\log k}{d}}\right) ~~,
\end{align}
where $S_{j} = \{ t \in [T] ~:~ t = j \mod (d+1) \}$, and we used the fact that $|S_{j}| = \Theta(T/d)$. 
Since the sets $S_{0},\ldots,S_{d}$ are disjoint and their union equals $[T]$, by summing \eqref{eq:exp3j} over $j=0,1,\ldots,d$ we obtain
\begin{align} \label{eq:my-exp3}
	\forall ~ x \in [k] ~, \quad
	\E \left[ \sum_{t=1}^{T} \ell_{t}(x_{t}) \right] - \sum_{t=1}^{T} \ell_{t}(x)
	~=~ O( \sqrt{d T k\log k} ) 
	~=~ O( \sqrt{m T k\log k} )~~.
\end{align}
However, notice that the loss of the player satisfies
\begin{align*}
	\sum_{t=1}^{T} f_{t}(x_{1:t}) 
	&~=~ \sum_{t=1}^{T} \big( a_{m} \ell_{t-m}(x_{t-m}) + \cdots + a_{0} \ell_t(x_t) \big) \\
	&~\le~ a_{m} \sum_{t=1}^{T}  \ell_{t}(x_{t}) + \cdots
		+ a_{0} \sum_{t=1}^{T}  \ell_{t}(x_{t}) \\
	&~=~ \sum_{t=1}^{T} \ell_{t}(x_{t}) ~,
\end{align*}
where the last equality uses the assumption that $\sum_{i=0}^{m} a_{i} = 1$.
A similar calculation shows that for any fixed $x \in [k]$,
\begin{align*}
	\sum_{t=1}^{T} f_{t}(x,\ldots,x) 	
	&~=~ \sum_{t=1}^{T} \big( a_{m} \ell_{t-m}(x) + \cdots + a_{0} \ell_t(x) \big) \\
	&~\ge~ a_{m} \left( \sum_{t=1}^{T} \ell_{t}(x) - m \right)+ \cdots
		+ a_{0} \left( \sum_{t=1}^{T} \ell_{t}(x) - m \right) \\
%	&~\ge~ \sum_{t=1}^{m} \ell_{t}(x) 
%		+ a \cdot \left( \sum_{t=1}^{T} \ell_{t}(x) - m \right) \\
	&~=~ \sum_{t=1}^{T} \ell_{t}(x) - m ~.
\end{align*}
Putting things together, we obtain that for all $x \in [k]$,
\begin{align*}
\sum_{t=1}^{T} f_{t}(x_{1:t}) - \sum_{t=1}^{T} f_{t}(x,\ldots,x)
	&~\le~ \sum_{t=1}^{T} \ell_{t}(x_{t}) - \sum_{t=1}^{T} \ell_{t}(x) + m ~.
\end{align*}
Finally, taking the expectation of this inequality and combining with \eqref{eq:my-exp3} completes the proof.
\end{proof}

\section{Conclusion}  \label{sec:conc}

\citet{CesaBianchiDeSh13,DekelDiKoPe14} were the first to show that a
finite-horizon online bandit problem with a finite set of actions can
be \emph{hard}. They achieved this by proving that the minimax regret
of the multi-armed bandit with switching costs has a rate of
$\t{\Theta}(T^{2/3})$. In this paper, we defined the
class of online learning problems that define their loss values using
a composite loss function, and proved that two non-linear instances
of this problem are also hard. Although we reused some technical
components from the analysis in \citet{DekelDiKoPe14}, the composite
loss function setting is quite distinct from the multi-armed bandit
with switching costs, as it does not explicitly penalize
switching. Our result reinforces the idea that the class of hard
online learning problems may be a rich class, which contains many
different natural settings. To confirm this, we must discover
additional online learning settings that are provably hard.

We also proved that linear composite functions induce easy bandit
learning problems. Characterizing the set of combining functions that
induce hard problems remains an open problem.

\bibliographystyle{abbrvnat}
\bibliography{bib}

\begin{thebibliography}{13}
\providecommand{\natexlab}[1]{#1}
\providecommand{\url}[1]{\texttt{#1}}
\expandafter\ifx\csname urlstyle\endcsname\relax
  \providecommand{\doi}[1]{doi: #1}\else
  \providecommand{\doi}{doi: \begingroup \urlstyle{rm}\Url}\fi

\bibitem[Antos et~al.(2012)Antos, Bart{\'o}k, P{\'a}l, and
  Szepesv{\'a}ri]{Antos:12}
A.~Antos, G.~Bart{\'o}k, D.~P{\'a}l, and C.~Szepesv{\'a}ri.
\newblock Toward a classification of finite partial-monitoring games.
\newblock \emph{Theoretical Computer Science}, 2012.

\bibitem[Arora et~al.(2012)Arora, Dekel, and Tewari]{Arora:12}
R.~Arora, O.~Dekel, and A.~Tewari.
\newblock Online bandit learning against an adaptive adversary: from regret to
  policy regret.
\newblock In \emph{Proceedings of the Twenty-Ninth International Conference on
  Machine Learning}, 2012.

\bibitem[Auer et~al.(2002)Auer, Cesa-Bianchi, Freund, and Schapire]{Auer:02}
P.~Auer, N.~Cesa-Bianchi, Y.~Freund, and R.~Schapire.
\newblock The nonstochastic multiarmed bandit problem.
\newblock \emph{SIAM Journal on Computing}, 32\penalty0 (1):\penalty0 48--77,
  2002.

\bibitem[Cesa-Bianchi and Lugosi(2006)]{CesaBianchi:06}
N.~Cesa-Bianchi and G.~Lugosi.
\newblock \emph{Prediction, learning, and games}.
\newblock Cambridge University Press, 2006.

\bibitem[Cesa-Bianchi et~al.(1997)Cesa-Bianchi, Freund, Haussler, Helmbold,
  Schapire, and Warmuth]{CesaFrHaHeScWa97}
N.~Cesa-Bianchi, Y.~Freund, D.~Haussler, D.~P. Helmbold, R.~E. Schapire, and
  M.~K. Warmuth.
\newblock How to use expert advice.
\newblock \emph{Journal of the ACM}, 44\penalty0 (3):\penalty0 427--485, May
  1997.

\bibitem[Cesa-Bianchi et~al.(2013)Cesa-Bianchi, Dekel, and
  Shamir]{CesaBianchiDeSh13}
N.~Cesa-Bianchi, O.~Dekel, and O.~Shamir.
\newblock Online learning with switching costs and other adaptive adversaries.
\newblock In \emph{Advances in Neural Information Processing Systems 26}, 2013.

\bibitem[Dekel et~al.(2013)Dekel, Ding, Koren, and Peres]{DekelDiKoPe14}
O.~Dekel, J.~Ding, T.~Koren, and Y.~Peres.
\newblock Bandits with switching costs: ${T}^{2/3}$ regret.
\newblock \emph{arXiv preprint arXiv:1310.2997. (STOC 2014, to appear)}, 2013.

\bibitem[Freund and Schapire(1997)]{FS97}
Y.~Freund and R.~Schapire.
\newblock A decision-theoretic generalization of on-line learning and an
  application to boosting.
\newblock \emph{Journal of computer and System Sciences}, 55\penalty0
  (1):\penalty0 119--139, 1997.

\bibitem[Geulen et~al.(2010)Geulen, V{\"o}cking, and Winkler]{Geulen:10}
S.~Geulen, B.~V{\"o}cking, and M.~Winkler.
\newblock Regret minimization for online buffering problems using the weighted
  majority algorithm.
\newblock In \emph{Proceedings of the 23rd International Conference on Learning
  Theory}, pages 132--143, 2010.

\bibitem[Gyorgy and Neu(2011)]{gyorgy2011near}
A.~Gyorgy and G.~Neu.
\newblock Near-optimal rates for limited-delay universal lossy source coding.
\newblock In \emph{Information Theory Proceedings (ISIT), 2011 IEEE
  International Symposium on}, pages 2218--2222. IEEE, 2011.

\bibitem[Kalai and Vempala(2005)]{Kalai:05}
A.~Kalai and S.~Vempala.
\newblock Efficient algorithms for online decision problems.
\newblock \emph{Journal of Computer and System Sciences}, 71:\penalty0
  291--307, 2005.

\bibitem[Littlestone and Warmuth(1994)]{LW94}
N.~Littlestone and M.~Warmuth.
\newblock The weighted majority algorithm.
\newblock \emph{Information and Computation}, 108:\penalty0 212--261, 1994.

\bibitem[Yao(1977)]{Yao77}
A.~Yao.
\newblock Probabilistic computations: {T}oward a unified measure of complexity.
\newblock In \emph{Proceedings of the 18th IEEE Symposium on Foundations of
  Computer Science (FOCS)}, pages 222--227, 1977.

\end{thebibliography}

%%\appendix
%%
%%\section{Technical Proofs}

\end{document}